\newtheorem{theorem}{Theorem}
\newtheorem{lemma}{Lemma}
\newtheorem{definition}{Definition}
\newtheorem{remark}{Remark}
\newtheorem{proposition}{Proposition}
\newtheorem{proof}{Proof}
\newtheorem{claim}{Claim}
\definecolor{lightgray}{gray}{0.5}
\title{\textbf{Decentralized Competing Bandits in Non-Stationary Matching Markets}}
\author{Avishek Ghosh$^{\dagger}$, Abishek Sankararaman$^{\ddagger}$, Kannan Ramchandran$^{\star}$, Tara javidi$^{\dagger,\diamond}$ and Arya Mazumdar$^{\dagger}$  \vspace{2mm} \\
Halıcıoğlu Data Science Institute (HDSI), UC San Diego$^\dagger$ \\
\vspace{1.5mm}
AWS AI, Palo Alto, USA$^\ddagger$  \\
\vspace{1.5mm}
Electrical Engg. and Computer Sciences, UC Berkeley$^{\star}$\\
\vspace{1.5mm}
Electrical and Computer Engineering, UC San Diego$^\diamond$\\
email: \{a2ghosh,arya,tjavidi\}$@$ucsd.edu, abisanka$@$amazon.com, kannanr$@$eecs.berkeley.edu
}
\date{}
\begin{document}
\maketitle

\begin{abstract}
Understanding complex dynamics of two-sided online matching markets, where the demand-side agents compete to match with the supply-side (arms), has recently received substantial interest. To that end, in this paper, we introduce the framework of decentralized two-sided matching market under non stationary (dynamic) environments. We adhere to the serial dictatorship setting, where the demand-side agents have unknown and different preferences over the supply-side (arms), but the arms have fixed and known preference over the agents. We propose and analyze a decentralized and asynchronous learning algorithm, namely Decentralized Non-stationary Competing Bandits (\texttt{DNCB}), where the agents play  (restrictive) successive elimination type learning algorithms to learn their preference over the arms. The complexity in understanding such a system stems from the fact that the competing bandits choose their actions in an asynchronous fashion, and the lower ranked agents only get to learn from a set of arms, not \emph{dominated} by the higher ranked agents, which leads to \emph{forced exploration}. With carefully defined complexity parameters, we characterize this \emph{forced exploration} and obtain sub-linear (logarithmic) regret of \texttt{DNCB}. Furthermore, we validate our theoretical findings via experiments.
\end{abstract}

\vspace{-5mm}
\section{Introduction}
\label{sec:intro}
\vspace{-2.5mm}
Repeated decision making by multiple agents in a competitive and uncertain environment is a key characteristic of modern day, two sided markets, e.g., TaskRabbit, UpWork, DoorDash, etc. Agents often act in a decentralized fashion on these platforms, and understanding the induced dynamics is an important step before designing policies around how to operate such platforms to maximize various system objectives such as revenue, efficiency and equity of allocations \citep{johari2021matching, liu2020competing}. A body of recent work is aimed at understanding the decentralized learning dynamics in such matching markets \cite{ucbd3,liu2020competing,liu2021bandit,dai2021learning,dai2021multi,basu21a}. This line of work studies the matching markets introduced first by the seminal work of \cite{gale1962college}, under the assumption where the participants are not aware of their preference and learn it over time by participating in the market. A key assumption made in these studies is that the true preferences of the market participants are static over time, and thus can be learnt with repeated interactions.

Markets, however are seldom stationary and continuously evolving. Indeed, an active area of research in management sciences and operations research revolve around understanding the equilibrium properties in such evolving markets \cite{damiano2005stability,akbarpour2020thickness, kurino2020credibility, johari2021matching}. However, a central premise in this line of work is that the participants have exact knowledge over their preferences, and only need to optimize over other agents' competitive behaviour and future changes that may occur. In this work, we take a step towards bridging the two aforementioned lines of work. To be precise, we study the learning dynamics in markets where both the participants do not know their exact preferences and the unknown preferences are themselves \emph{smoothly varying} over time.

Conceptually, the seemingly simple addition of varying preferences invalidates the core premise of learning algorithms in a stationary environment (such as those in \citep{liu2021bandit, ucbd3}) where learning is guaranteed to get better with time as more samples can potentially be collected. In a dynamic environment, agents need to additionally trade-off collecting more samples by competing with other agents to have a refined estimate, with the possibility that the quantity to be estimated being stale and thus not meaningful.

\textbf{Model Overview:} The model we study consists of $N$ agents and $k \geq N$ resources or arms, where the agents  repeatedly make decisions of which arm to match with over a time horizon of $T$. The agents are globally ranked from $1$ through $N$. The agents are initially assumed to not know their rank. In each round, every agent chooses one of the $k$ arms to match with. Every arm that has one or more agents requesting for a match, \emph{allocates} itself to the highest ranking agent requesting a match\footnote{If $i < j$, then agent with rank $i$ is said to be higher ranked than agent $j$}, while \emph{blocking} all other requesting agents. If at time $t$, agent $j$ is matched to arm $\ell$, then agent $i$ sees a random reward independent of everything else with mean $\mu_{j,\ell,t}$. The agents that are blocked are notified of being blocked and receive $0$ reward. Moreover the agents are decentralized, i.e., make decisions on which arm to match is a function of the history of the arms chosen, arms matched and rewards obtained at that agent. 

The key departure from prior works of \cite{liu2020competing, liu2021bandit, ucbd3} is that the unknown arm-means between any agent $j$ and arm $\ell$ is time-varying, i.e., the mean is dependent on time $t$. We call our model \emph{smoothly varying}, because we impose the constraint that for all agents $j$ and arms $\ell$, and time $t$, $|\mu_{j,\ell,t} - \mu_{j,\ell,t+1}| \leq \delta$, for some known parameter $\delta$. However, we make no assumptions on the synchronicity of the markets, i.e., the environments of different agents can change arbitrarily with the only constraint that any arm-agent pair means does not change by more than $\delta$ in one time-step. 

\textbf{Why is this model challenging ?} Even in the single agent case without competitions, algorithms such as UCB \cite{auer2002finite} perform poorly compared to algorithms such as {\ttfamily SnoozeIT} \cite{krishnamurthy2021slowly}  that adapts to the varying arm means (c.f. Figure \ref{fig:single_agent}) in smoothly varying environments. The reason is that stationary algorithms such as UCB weighs all the samples collected thus far equally in identifying which arm to pull, while adaptive algorithms such as {\ttfamily SnoozeIt} weighs recent samples more than older samples in order to estimate the arm-mean at the current time point. This is exacerbated in a multi-agent competitive setup where agents need to decide whether to pull an arm that yielded good results in the past, but is facing higher competition at the present. 

We circumvent this problem by introducing the idea of \emph{forced exploration}. Since the environments across agents are time-varying possibly asynchronously, a lower ranked agent may be forced to explore and obtain linear regret, if any of the higher ranked agents are exploring. To build intuition, consider a $2$ agent system in which the higher ranked agent  is called Agent 1, and the other agent is Agent 2. Suppose, Agent $1$'s environment (i.e., arm-means) are volatile where the gap between the best and second best arm is small, while Agent $2$ has a more benign environment, where all arm-means are well separated and not varying with time. In this case, Agent $1$ will be forced to explore arms a lot as its environment is fluctuating with no clear best arm emerging. Since any collision implies that Agent $2$ will not receive a reward, Agent $2$ is also forced to explore and play sub-optimal arms to evade collision, even when it knows its own best arms. This phenomenon indeed also occurs in the stationary setting, albeit in the stationary setting, every agent knows that after an initial exploration time, all agents will ``settle" down and find their best arm. This is the concept of freezing time in \cite{ucbd3, basu21a}. In the dynamic setting however, the forced explorations can keep occurring repeatedly over time, as the agents environment changes.

\vspace{-2.5mm}
\subsection{Our Contributions}
\vspace{-2.5mm}
\subsubsection{Algorithms}
\vspace{-2mm}
We introduce a learning algorithm, {\ttfamily DNCB}, in which agents proceed in phases with asynchronous start and end-points, wherein in each phase, agents explore among those arms that are not currently preferred by higher-ranked agents, and subsequently exploit a good arm, for a dynamic duration of time in which the estimated best arm can remain to be optimal. The main algorithmic innovation is to  identify that the static synchronous arm-deletion strategy of UCB-D3 \cite{ucbd3}, can be coupled with {\ttfamily SnoozeIt} to yield a dynamic, asynchronous explore-exploit type algorithm for non-stationary bandits. 
\vspace{-2.5mm}
\subsubsection{Technical novelty}
\vspace{-2mm}
In order to analyze and prove that {\ttfamily DNCB} yields good regret guarantees, we introduced this notion of \emph{forced exploration}. Roughly speaking, this is the regret incurred due to exploration of an agent, when the higher ranked agents are exploring. This extra regret is a consequence of the serial-dictatorship (which we define in Section~\ref{sec:setup}), whereby agents can incur collision and do not get any reward. Although agents in the stationary setting also incur forced exploration, its effect is bounded since every agent can eventually guarantee that the best arm can be learnt. However, in an asynchronously varying environment, bounding this term is non-trivial. We circumvent this by decomposing the forced exploration of an agent recursively; an agent ranked $r$ \emph{effectively explores} if either its own environment is fluctuating and thus hard to identify its best arm, or if the agent ranked $r-1$ is \emph{effectively exploring}. We leverage this to recursively bound the regret of agent ranked $r$ as a function of agent ranked $r-1$. Unravelling this recursion yields the final regret.
\vspace{-2.5mm}
\subsubsection{Experiments}
\vspace{-2mm}
We empirically validate our algorithms to demonstrate that it {\em (i)} is simple to implement and {\em (ii)} the results match the theoretical insights, such as agents incurring additional regret due to forced explorations. 

\noindent One criticism to our model is that the agents are aware of the parameter $\delta$, which is used in the algorithm. We however argue that even in the presence of this known parameter, designing decentralized algorithms is challenging and requires several technical novelty. Parameter free algorithms that do not require any knowledge of $\delta$ is unknown even for the single agent bandit problem \cite{krishnamurthy2021slowly}. Designing parameter free algorithms in the multi-agent case is more challenging and is left to future work.
\vspace{-3mm}
\section{Related work}
\vspace{-1.5mm}
\paragraph{Bandits and Matching Markets}
\vspace{-2mm}
Bandits and matching markets have received a lot of attention lately, owing to both their mathematical non-triviality and the enormous practical impact they hold. Regret minimization in matching markets was first introduced in \cite{liu2020competing} which studied the much simpler problem of stationary markets under a centralized scheme, where a central entity matches agents with arms at each time. They showed that under this policy, a learning algorithm can get per-agent regret scaling as $\mathcal{O}(\log(T))$. Subsequently, \cite{ucbd3} studied the decentralized version of the problem under the serial dictatorship and proposed the {\ttfamily UCB-D3} algorithm that achieved $\mathcal{O}(\log(T))$ per-agent regret. Subsequently, \cite{liu2021bandit} proposed {\ttfamily CA-UCB}, a fully decentralized algorithm that could achieve $\mathcal{O}(\log^2(T))$ per-agent regret in the general decentralized stationary markets. 
Matching markets has been an active area of study in combinatorics and theoretical computer science due to the algebraic structures they present \cite{pittel1989average, roth1990random, knuth1997stable}. 
However, these works consider the equilibrium structure and not the learning dynamics induced when participants do not know their preferences.
\vspace{-1.5mm}
\paragraph{Non-Stationary Bandits}
The framework on non stationary bandits were introduced in \cite{whittle1988restless} with restless bandits. There has been a line of interesting work in this domain--for example in \cite{garivier2011upper,auer2019,liu2018change} the abruptly changing setup is analyzed, and change point based detection methods were employed. Furthermore, in \cite{besbes2014stochastic}, a total variation budgeted setting is considered, where the total amount of (temporal) variation is known. On the other hand, \cite{wei2018abruptly,krishnamurthy2021slowly} focuses on the \emph{smoothly} varying non-stationary environment. Note that \cite{wei2018abruptly} modify the sliding window UCB algorithm of \cite{garivier2011upper} and employ windows of growing size. On the other hand, very recently \cite{krishnamurthy2021slowly} analyzed the \emph{smoothly} varying framework by designing windows of dynamic length and test for optimality within a sliding window.
\vspace{-2.5mm}
\paragraph{Notation:} For a positive integer $r$, we denote the set $\{1,2,\ldots,r\}$ by $[r]$. Moreover, For $2$ integers, $a,b$, the notation $a \% b$  implies the remainder (modulo) operation.

\vspace{-3mm}
\section{Problem Setup}
\label{sec:setup}
\vspace{-3mm}
We consider the standard setup with $N$ agents and $k$ arms, with $k \geq N$. At time $t$, every agent $j \in [N]$ has a ranking of the arms, which is dictated by the arm means $\{\mu_{j,\ell,t}\}_{j \in [N], \ell \in [k]} $. On the other hand, it is assumed that the agents are ranked homogeneously for all the arms, and the ranking is known to the arms. This is called the \emph{serial dictatorship} model, is a well studied model in the market economy (see \cite{abdulkadirouglu1998random, ucbd3}), and without loss of generality, it is assumed that the rank of agent $j \in [N]$ is $j$. We say agent $j$ is matched to arm $\ell$ at time $t$, if agent $j$ pulls and receives (non zero) reward from arm $\ell$.  Our goal here is to find the unique stable matching (uniqueness ensured by the serial dictatorship model) between the agents and the arm side in a non-stationary (dynamic) environment. We consider the smooth varying framework of \cite{wei2018abruptly,krishnamurthy2021slowly} to model the non-stationary, which assumes $|\mu_{j,\ell,t+1} -\mu_{j,\ell,t}| \leq \delta$ for all $t,j,k$, and the maximum drift is $\delta$.

We write $\ell_*^{(1,t)}$ as the arm preferred by the the Agent ranked $1$ at time $t$, i.e., $\ell_*^{(1,t)} = \mathrm{argmax}_{\ell \in [k]} \mu_{1,\ell,t}$. Similarly, for Agent ranked $j$, the preferred arm is given by $\ell_*^{(j,t)} = \mathrm{argmax}_{\ell \in [k] \setminus \{\ell_*^{(1,t)},.,\ell_*^{(j-1,t)} \} } \mu_{j,\ell,t}$. So, we see that $(1,\ell_*^{(1,t)})$ forms a stable match, and so does $(j,\ell_*^{(j,t)})$ for $2 \leq j \leq N$. Let $L^{(j)}(t)$ be the arm played by an algorithm $\mathbb{A}$. The regret of agent $j$ playing algorithm $\mathbb{A}$ upto time $T$ is given by $
    R_j = \sum_{t=1}^T \mathbb{E} [\mu_{j, \ell_*^{(j,t)}, t} - \mu_{j, L^{(j)}(t), t} \mathbf{1}_{M_{L^{(j)}(t)} = j}],
$ where $M(.)$ indicates whether arm $L^{(j)}$ is matched.

\vspace{-2.5mm}
\section{Warm-up: \texttt{DNCB} with $2$ agents}
\label{sec:algo}
\begin{algorithm}[t!]
   \caption{\texttt{DNCB} with $N=2$}
   \label{algo:two}
\begin{algorithmic}
   \STATE {\bfseries Input:} Horizon $T$, drift limit $\delta$
   \STATE Initialize set of tuples $S_1 = \phi$, $S_2^{(j)} = \phi,$ $\forall$ $j \in [k]$, Initialize episode index $i_1 \gets 1$, $i_2 \gets 1$
   \STATE \texttt{RANK ESTIMATION()}
   \FOR{$t=1,2,\ldots,T$}
   \STATE \underline{\textbf{\texttt{Pull-Arm by Agent 1:}}}
   \STATE if $S_1 = \phi$, pull arm $ t \mathbin{\%} k$ (round robbin) (\texttt{Explore}); if $\exists (x,s) \in S_1$ s.t. $s>t$, play arm $x$ (\texttt{Exploit})
   \STATE \underline{\texttt{Test by Agent 1:}}\\
   \textbf{if} $\exists$ arm $a$ and $\Tilde{\lambda}$ s.t. $a>_{\Tilde{\lambda}} b$ for $b \in [k]\setminus \{a\}$ \textbf{then}, 
   $\Lambda_{i_1} \gets t - s_{i_1}$, $\text{buf}_1 = \frac{2}{\delta}\sqrt{\frac{k\log T}{\Lambda_{i_1}}}$
   \STATE \quad \textbf{if}  $\text{buf}_1 > \Lambda_{i_1}$, $S_1 \gets S_1 \cup \{(a,s_{i_1} + \text{buf}_1)\}$,  Updates black-board with $(a,s_{i_1} + \text{buf}_1)$
   \STATE \quad \textbf{else} $i_1 \gets i_1 +1$, 
   $s_{i_1} \gets t$, \quad $i_2 \gets i_2+1, t_{i_2} \gets t$
   \STATE \underline{\texttt{Release arm by Agent 1:}}
   \STATE \quad \textbf{if} $\exists(x,s)\in S_1 : s \leq t$ \textbf{then} $S_1 \gets S_1 \setminus (x,s)$, release arm $x$ 
    \STATE \quad \qquad $i_1 \gets i_1+1, s_{i_1} \gets t$, $i_2 \gets i_2+1, t_{i_2} \gets t$
   \STATE \underline{\textbf{\texttt{Pull Arm by Agent 2:}}}
   \STATE \textbf{Case I:} if \texttt{Agent 1} is not committed, pull arm $t +1 \mathbin{\%} k $ (round robbin on $[k]$) (\texttt{Explore ALL})
   \STATE \textbf{Case II:} if \texttt{Agent 1} is committed to arm $j\in [k]$ and $S_2^{(j)} = \phi$, pull $t \% (k-1)$-th smallest arm id in $[k]\setminus \{j\}$ (round robbin on $[k]\setminus \{j\}$) (\texttt{Explore-j}) 
   \STATE \textbf{Case III:} if \texttt{Agent 1} is committed to arm $j$, and $\exists (x,s) \in S_2^{(j)}$ s.t. $s > t$, play arm $x$ (\texttt{Exploit})
   \textbf{if}  $z_t(2) \neq z_{t-1}(2)$ \textbf{then}, $i_2 \gets i_2 +1$, $t_{i_2} \gets t$
   \STATE \underline{\texttt{Test by Agent 2:}}
   \FOR{$j \in [k]$ s.t. \texttt{Agent 2} is in \texttt{Explore-j} or \texttt{Explore ALL}}
   \IF{$\exists$ arm $a \in [k]\setminus \{j\}$ and $\Tilde{\lambda}$ s.t. $a>_{\Tilde{\lambda}} b$ for $b \in [k]\setminus \{a,j\}$}
   \STATE $\tau_{i_2}^{(j)} \gets t - t_{i_2}$, $\text{buf}_2 =  \frac{2}{\delta}\sqrt{{|\mathcal{E}_{i_2}|}\log T/\tau_{i_2}^{(j)}} $;  define $\Bar{t}_{i_2} = \min\{t_{i_2} + \text{buf}_2, s_{i_1 +1}\}$
   \STATE \quad \textbf{if} $\text{buf}_2 > \tau_{i_2}^{(j)}$, Update $S_2^{(j)} \gets S_2^{(j)} \cup \{(a,\Bar{t}_{i_2})\}$ 
   \ENDIF
   \ENDFOR
   \STATE \underline{\texttt{Release arms for Agent 2:}}
   \FOR{$j \in[k]$}
   \STATE \quad \textbf{if} $\exists(x,s)\in S_2^{(j)} : s \geq t$ \textbf{then},  $S_2^{(j)} \gets S_2^{(j)} \setminus (x,s)$
   \ENDFOR
   \ENDFOR
\end{algorithmic}
\end{algorithm}
\vspace{-2.5mm}
We now propose and analyze the algorithm, Decentralized  Non-stationary Competing Bandits (\texttt{DNCB}) to handle the competitive nature of a market framework under a smoothly varying non-stationary environment. To understand the algorithm better, we first present the setup with $2$ agents and $k$ arms, and then in Section~\ref{sec:gen}, we generalize this to $N$ agents.

We consider $N=2$, since it is the simplest non-trivial setup to gain intuition about the complexity of the competitive nature of \texttt{DNCB} algorithm. Without loss of generality, assume that agent $r$ has rank $r$, where $r \in \{1,2\}$. So, in the above setup, Agent 1 is the highest ranked agent. 
\vspace{-1.5mm}
\paragraph{Black Board model:} Moreover, to begin with and for simplicity, we  assume a black-board model, and later in Section~\ref{sec:board}, remove the necessity of this black board. We emphasize that black-board model of communication is quite standard in centralized multi-agent systems, with applications in game theory, distributed learning and auction applications \cite{awerbuch2008competitive,buccapatnam2015information,agarwal2012information}. Through this black-board, the agents can communicate to one other. This is equivalent to broadcasting in the centralized framework.

The learning algorithm is presented in Algorithm~\ref{algo:two}. The algorithm runs over several episodes indexed by $i_1$ and $i_2$ for Agent 1 and 2 respectively.
\vspace{-1.5mm}
\paragraph{\texttt{RANK ESTIMATION ():}} We let both agents pull arm 1 in the first time slot. Agent 1, will see a (non-zero) reward, and hence estimates its rank to be 1. The other agent, will see a 0 reward, so it estimates its rank as 2. 
\vspace{-2mm}
\paragraph{Agent 1:} Since Agent 1 is highest ranked agent, it does not face any collision. It plays the well-known and standard Successive Elimination (SE) type algorithm (see \cite{slivkins2019introduction}). As mentioned in Section~\ref{sec:intro}, we use a variation of \texttt{SnoozeIT} algorithm of \cite{krishnamurthy2021slowly} with $k$ arms. Specifically, it (a) first explores to identify if there is a \emph{best} arm  and (b) if it finds a best arm, it commits to that for some amount of time. Note that with non-stationary environment, Agent 1 needs to repeat this procedure over time. In Figure~\ref{fig:algo}, we consider one episode of Agent 1, where the yellow segments indicate the exploration time, and at the end of that, the purple segment indicates the commit (exploitation) (to say arm $i^*$) time. Furthermore, when Agent 1 commits, it writes the arm on which it is committing and the duration of the commit to the black-board, so that Agent 2 can accordingly choose actions from a restricted set of arms to avoid collision. Note that, there is no competition here, and the (interesting) market aspect is absent. 

We now define an optimality test via which Agent 1 (and 2) decides to commit. Let $\hat{\mu}_{a,t}(\Tilde{w})$  denote the empirical reward mean of arm $a$ at time $t$, based on its last $\Tilde{w}$ pulls.
\begin{definition}($(\Tilde{\lambda}, \mathcal{A})$-optimality)
At time $t$, an arm $a$ is said to be $\Tilde{\lambda}$ optimal with respect to set $\mathcal{A}$, if $\hat{\mu}_{a,t}(\Tilde{w}) > \hat{\mu}_{b,t}(\Tilde{w}) + 4r(\Tilde{w}) -\delta$, for all $b \in \mathcal{A}\setminus \{a\}$, where $\Tilde{w} = \frac{c_1 \log T}{\Tilde{\lambda}^2}$, and $r(\Tilde{w}) = \sqrt{\frac{2\log T}{\Tilde{w}}}$.
\label{defn:lambda_better}
\end{definition}
\vspace{-1mm}
Since Agent 1 faces no competition, $\mathcal{A} = [k]$ (the set of all arms), but $\mathcal{A}$ will be different for Agent 2, as we will see shortly. In Algorithm~\ref{algo:two}, we denote $\Lambda_{i_1}$ as the duration of the exploration period before the test succeeds (with $\mathcal{A}=[k])$ at episode $i_1$, and we use $\{s_{i_1}\}_{i_1=1,2,..}$ to denote the starting of epochs. After the test, the agent exploits the best arm for $(\text{buf}_1 - \Lambda_{i_1})$ time, and then releases it. We define the set $S_1$ to determine whether Agent 1 should commit or continue exploring.

\begin{figure}[t!]
\subfloat[][]{
    \includegraphics[width=0.48\linewidth]{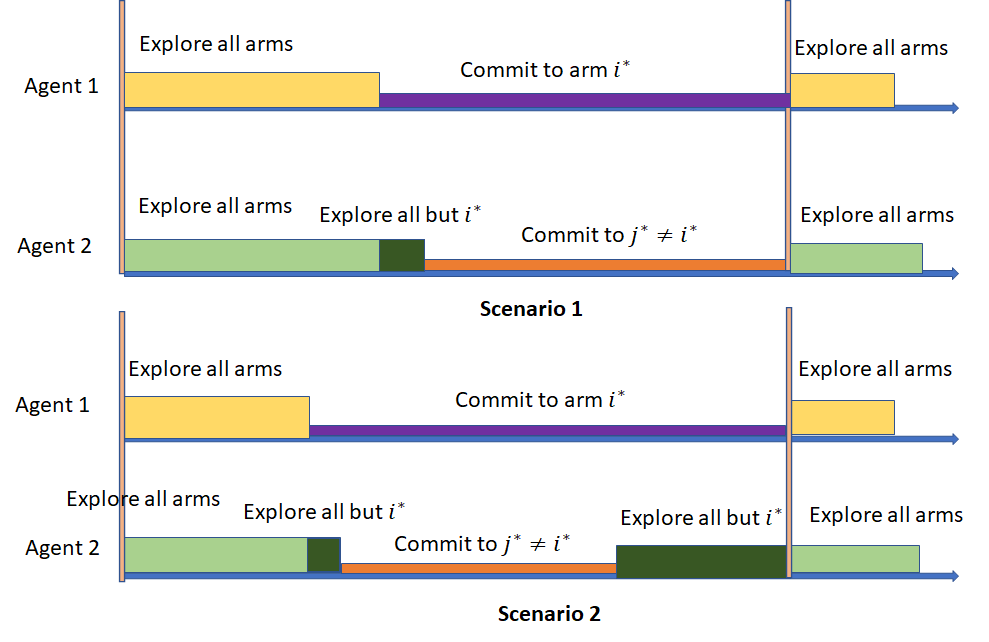}
    \label{Action of Agents 1 and 2 in a matching markets}
    }
    \subfloat[][]{
    \includegraphics[width=0.48\linewidth]{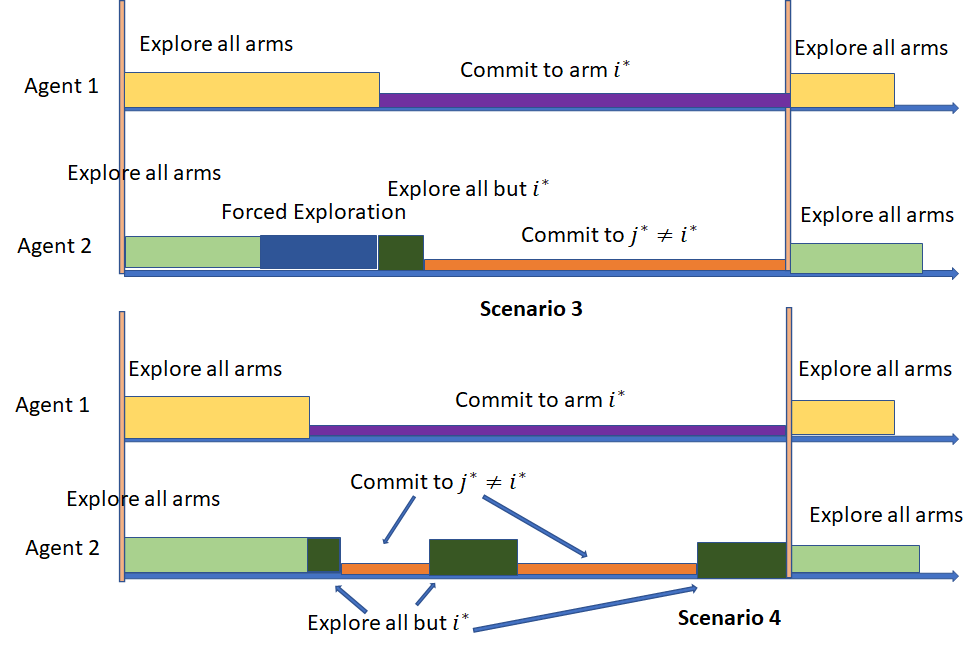}
    \label{Action of Agents 1 and 2 in a matching markets}
    }
\caption{Action of Agents 1 and 2 in a matching markets}
\label{fig:algo}
\vskip -0.2in
\end{figure}
\vspace{-2mm}
\paragraph{Agent 2:} The actions of Agent 2 borne out the competition (market) aspect of the problem, as seen in Figure~\ref{fig:algo}. We now explain its different phases:

\texttt{Explore ALL:} Here, Agent 2 explores all the arms, i.e., plays in a round robbin fashion within the arm-set $[k]$. This is shown in light green in Figure~\ref{fig:algo}. This happens when Agent 1 is also exploring and has not committed yet.

\texttt{Explore-j:} This is shown in dark-green in Figure~\ref{fig:algo}. Here, Agent 2 explores within the set $[k] \setminus \{j\}$. In the figure, $j = i^*$. This is done to avoid collision for Agent 2, since we know that when Agent 1 commits to arm $i^*$, Agent 2 will get $0$ reward while pulling $i^*$, and hence it is in its best interest to explore all but $i^*$.

\texttt{Forced Exploration:} Consider Scenario 3 of Figure~\ref{fig:algo}(b). Here, Agent 2 has decided to commit on an arm before Agent 1. However, it cannot start to exploit since Agent 1 is still exploring. Otherwise, it will periodically face collisions (and get $0$ reward, hence incurring linear regret in this duration). This is the additional exploration faced by Agent 2, which we term as \emph{forced exploration} (shown in blue in Figure~\ref{fig:algo}(b)). In Theorem~\ref{thm:two}, we characterize the regret stems from this forced exploration.

\texttt{Exploit:} Observe that Agent 2 gets only gets to commit when Agent 1 has committed already, on the arm set $[k]\setminus \{i^*\}$. There is another caveat here. We also restrict Agent 2 to end its exploitation as soon as the exploitation of Agent 1 ends. The reasoning is same---Agent 1  starts exploring right after its exploitation and Agent 2 must release the arm it was exploiting to avoid collision. Note that this also results in higher regret of Agent 2, as it does not get to \emph{fully exploit} the arm it was committed to.

In Algorithm~\ref{algo:two}, we denote $\{t_{i_2}\}_{i_2 = 1,2,..}$ as the time instances where an epoch starts for Agent 2. We denote by $\mathcal{E}_{i_2} \subseteq \{1,\cdots,k\}$ to be the set of arms from which agent $2$ plays in phase $i_2$. Observe from  Figure~\ref{fig:algo} that in any given phase of Agent $2$, the set of arms it plays from is fixed. Moreover, we use the notation $z_t(2)$ to denote the state of Agent 2, and as explained in Algorithm~\ref{algo:two}, $
z_t(2) \in \{\texttt{Explore ALL}, \texttt{Explore}-j, \texttt{Exploit}(x) \}$, 
where the terms are explained above. Furthermore, we define $\tau_{i_2}^{(j)}$ as the duration of the exploration period before the $(\Tilde{\lambda},\mathcal{A})$ test succeeds with $\mathcal{A}=[k]\setminus \{j\}$ in epoch $i_2$.
We introduce $\Bar{t}_{i_2}$ to ensure that the exploitation of Agent 2 expires as soon as the exploitation ends for Agent 1.

\textbf{\textit{Saving extra exploration:}}  Note that Agent 2 continues to test for an optimal arm even when Agent 1 is exploring. It might seem to be wasteful at first since it cannot commit immediately. However, Agent 2 constructs the sets $S_2^{(j)}$, which denote the exploitation period of Agent 2, without arm $j$ in the system. This is useful because, as soon as Agent 1 commits to arm $j$ and $S_2^{(j)}$ is non-empty, Agent 2 gets to commit leveraging this test. This saves extra exploration for Agent 2 and hence reduces regret.

\vspace{-2mm}
\subsection{Problem Complexity---Dynamic Gap}
\vspace{-2mm}
We define the (dynamic) gap, denoted by $\{\lambda_t [r] \}_{t=1,2,..}$ for agent $r$, which determines how complex the problem is. This is expressed as an average gap over a local window. 
\begin{definition}
For $\mathcal{C}\subseteq [k]$, we define the dynamic gap on a dominated set $\mathcal{C}$ as,
\small
\vspace{-2mm}
\begin{align*}
    \lambda_t^\mathcal{C}[r]\!\! = \!\!\max_{\lambda \in [0,1]} \bigg \lbrace \!\! \min_{\substack{a,b \in [k] \setminus \mathcal{C} \\ a \neq b}}  \hspace{2mm} \frac{1}{w_k(\lambda)} | \sum_{t' = s}^t \! \mu_{a,r,t'} \!- \!\mu_{b,r,t'} | \geq \lambda \bigg \rbrace,
    \vspace{-2mm}
\end{align*}
\normalsize
and if such a $\lambda$ does not exist, we set $\lambda_t = c_1\frac{ (k-|\mathcal{C}|) \log T}{t}$. Here, $s=t-w_k(\lambda)+1$, and $w_k(\lambda) = \frac{c_0 \log T}{\lambda^2}$. For shorthand, if $\mathcal{C} = \phi$, we denote $\lambda_t^\phi[r] = \lambda_t [r]$. Here $c_1$ and $c_0$ are universal constants. \end{definition}
\begin{remark}
The \emph{dominated} dynamic gap is a strict generalization of the usual window based average gap used in non-stationary bandits. We introduce a dominated set $\mathcal{C}$, for the competitive market setting, since the actions of lower ranked agents are dominated by that of higher ranked ones.
\end{remark}
\vspace{-3mm}
\subsection{Regret Guarantee}
\vspace{-1mm}
\begin{theorem}[2 Agent \texttt{DNCB}]
\label{thm:two}
Suppose we run Algorithm~\ref{algo:two} with $2$ Agents upto horizon $T$ with drift $\delta$. Then the expected regret for Agent 1 is $R_1 \leq C  \sum_{\ell=1}^m \frac{1}{\lambda_{\min,\ell}[1]} \ k \log T$ and for Agent 2 is
\small
\begin{align*}
     R_2 \leq C_1 \sum_{\ell=1}^m \bigg \lbrace ( \frac{1}{\lambda_{\min,\ell}[2]} + \frac{1}{(\lambda_{\min,\ell}[1])^2}) k \log T  + \lceil (\frac{k}{k-1})^{1/3} \rceil (\frac{1}{\min_{a \in [k]} \lambda^{\{a\}}_{\min,\ell}[2]} ) (k-1) \log T \bigg \rbrace,
\end{align*}
\normalsize
where the horizon $T$ is divided into $m$ blocks, each having length at most $\min\{ c \, \delta^{-2/3} k^{1/3} \log^{1/3} T, T \}$. Here  $\lambda_{\min,\ell}[r] = \min_{t \in \ell\text{-th block}} \lambda_t[r] $ and $\lambda^{\{a\}}_{\min,\ell}[r] = \min_{t \in \ell\text{-th block}} \lambda^{\{a\}}_t[r]$ denote the dynamic gap of the problem over an entire $\ell$-th block.
\end{theorem}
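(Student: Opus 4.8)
The plan is to discretize $[T]$ into the $m$ blocks of the statement and prove a quasi-stationary regret bound on each block, then sum. The block length $\Lambda_{\max}=c\,\delta^{-2/3}k^{1/3}\log^{1/3}T$ is forced by the commit rule: $\text{buf}_1=\frac{2}{\delta}\sqrt{k\log T/\Lambda_{i_1}}>\Lambda_{i_1}$ rearranges to $\Lambda_{i_1}^3<4k\log T/\delta^2$, i.e.\ $\Lambda_{i_1}\lesssim\delta^{-2/3}k^{1/3}\log^{1/3}T$. Thus within one block either an exploration episode of Agent 1 finishes and a commit of comparable length follows, or the gap is so small that no commit is possible and the block is spent exploring; in either case the per-block behaviour is governed by a single effective gap, which I take to be $\lambda_{\min,\ell}[1]$ (respectively $\lambda_{\min,\ell}[2]$ and $\lambda^{\{a\}}_{\min,\ell}[2]$). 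Over the block the mean of any fixed arm moves by at most $\delta\Lambda_{\max}$, and this is exactly the quantity the $-\delta$ slack in the $(\tilde\lambda,\mathcal A)$-optimality test (Definition~\ref{defn:lambda_better}) is designed to absorb.

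First I would fix the clean (concentration) event: a Hoeffding/Azuma bound on each window of length $\tilde w=c_1\log T/\tilde\lambda^2$, union-bounded over arms, times and scales $\tilde\lambda$, gives that with probability $\ge1-T^{-c}$ every $\hat\mu_{a,t}(\tilde w)$ lies within $r(\tilde w)=\sqrt{2\log T/\tilde w}$ of its window-averaged true mean. On this event the test is sound in two directions: (i) if it declares $a$ optimal then $a$ stays $\lambda$-dominant over the whole buffered commit, so exploitation is regret-free; and (ii) by the successive-elimination structure, a sub-optimal arm with block-averaged gap $\Delta$ to the best is separated and dropped after $O(\log T/\Delta^2)$ round-robin pulls, contributing exploration regret $\Delta\cdot O(\log T/\Delta^2)=O(\log T/\Delta)\le O(\log T/\lambda_{\min,\ell}[1])$. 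Summing over the $k$ arms and then the $m$ blocks yields $R_1\le C\sum_{\ell=1}^m\frac{1}{\lambda_{\min,\ell}[1]}k\log T$ (the complementary $T^{-c}$ event contributes only $O(1)$ in expectation).

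For Agent 2 I would decompose its per-block regret by the state $z_t(2)$, using that the round-robin offsets keep it collision-free while exploring: in \texttt{Explore ALL} it plays $(t+1)\%k$ against Agent 1's $t\%k$, and in \texttt{Explore}-$j$ it avoids the committed arm $j$, so in neither phase does it collide. Hence each of these phases is a bona-fide single-agent successive-elimination run and is charged at gap rate exactly as above: \texttt{Explore ALL} on the full set $[k]$ gives the first term $O(\frac{1}{\lambda_{\min,\ell}[2]}k\log T)$, while \texttt{Explore}-$j$ on the dominated set $[k]\setminus\{j\}$ (with $|\mathcal E_{i_2}|=k-1$, so its natural block length scales as $\delta^{-2/3}(k-1)^{1/3}\log^{1/3}T$) gives the third term $\lceil(\frac{k}{k-1})^{1/3}\rceil\frac{1}{\min_a\lambda^{\{a\}}_{\min,\ell}[2]}(k-1)\log T$; the factor $\lceil(\frac{k}{k-1})^{1/3}\rceil$ is the number of Agent-2 exploration sub-blocks sitting inside one Agent-1 block, coming from the ratio of the two block lengths. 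The precomputed sets $S_2^{(j)}$ (the ``saving extra exploration'' device) let the tests run on all dominated sets already during \texttt{Explore ALL}, so that the moment Agent 1 commits to $j$ with $S_2^{(j)}\neq\phi$ Agent 2 exploits with no new exploration, and the truncation $\bar t_{i_2}=\min\{t_{i_2}+\text{buf}_2,s_{i_1+1}\}$ only shortens a regret-free \texttt{Exploit} phase.

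The crux, and the step I expect to be the main obstacle, is the second (squared-gap) term $\frac{1}{(\lambda_{\min,\ell}[1])^2}k\log T$, the \emph{forced-exploration} overhead. Once Agent 2 has identified its arm but Agent 1 is still exploring, it cannot commit: exploiting would invite periodic collisions as Agent 1 cycles through all arms, so it is forced to keep playing sub-optimally, and here I would use only the crude rate-$1$ bound on its per-step regret. The key coupling is that every such step lies inside an exploration episode of Agent 1, whose total length per block is the Agent-1 exploration \emph{length} $O(k\log T/(\lambda_{\min,\ell}[1])^2)$ --- note the exponent $2$, since resolving a gap $\lambda$ by round-robin needs $\Theta(1/\lambda^2)$ samples per arm, whereas Agent 1's own \emph{regret} over the same interval carries only $1/\lambda$ because elimination makes most of those steps cheap. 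Formalizing this requires a careful interval accounting over the asynchronous episode markers $\{s_{i_1}\}$ and $\{t_{i_2}\}$ to injectively charge each forced-exploration step of Agent 2 to a distinct Agent-1 exploration step of the same block; this is precisely the recursion ``agent $r$ is forced to explore whenever agent $r-1$ explores'' specialized to $r=2$. Adding the three contributions over the $m$ blocks and absorbing constants gives the stated bound on $R_2$.
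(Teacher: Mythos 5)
Your proposal follows essentially the same route as the paper's proof: the same concentration event and ``exploitation is regret-free'' lemma, the same block length $\delta^{-2/3}k^{1/3}\log^{1/3}T$ extracted from the commit rule, the same three-way decomposition of Agent 2's regret (\texttt{Explore ALL}, forced exploration charged to the \emph{length} $O(k\log T/\lambda_{\min,\ell}[1]^2)$ of Agent 1's exploration episodes, and \texttt{Explore}-$j$ with the phase count $\lceil(k/(k-1))^{1/3}\rceil$ from the ratio of minimum episode lengths). The interval-accounting step you flag as the main obstacle is handled in the paper exactly as you propose, by bounding the forced-exploration time pointwise by $\Lambda_{i_1}$ and then by $O(k\log T/\lambda^2)$.
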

\vspace{-2mm}
\paragraph{Discussion:} \textbf{\textit{Regret of Agent 1 matches \cite{krishnamurthy2021slowly}}}: Observe that the regret of Agent 1 matches exactly to \texttt{Snooze-IT}. Since Agent 1 faces no collision, we were able to recover the regret of \texttt{Snooze-IT}.

\textbf{\textit{Regret of Agent 2}}: The regret of Agent 2 has 3 components. The first term, $\frac{k \log T}{\lambda_{\min,\ell}[2]}$ comes from the \texttt{Explore-ALL}. In this phase, Agent 2 explores all arms and the regret is similar to Agent 1. 

The second term in regret, $[\frac{1}{\lambda_{\min,\ell}[1]}]^2 k \log T$ originates from the \texttt{Forced Exploration} of Agent 2. Note that this depends on the complexity (gap) of Agent 1. This validates our intuition, because, when Agent 1's environment is complex, it takes more exploration for Agent 1, and as a result Agent 2 faces additional forced exploration. This is a manifestation of the market structure, since the regret of Agent 2 is influenced by that of higher ranked agent.

The third term in the regret expression comes from \texttt{Explore-j} phase, where Agent 1 is committed on arm $j$. Observe that here, the dominated gap naturally comes into the picture. The pre-factor of $[k/(k-1)]^{1/3}$ appears for the following reason. We design the blocks in such a way that each block contains at most 2 phases of Agent 1. Moreover, we show that the number of epochs for Agent 2 in one exploitation phase of Agent 1 is at most $2 \lceil [k/(k-1)]^{1/3} \rceil$.

\textbf{\textit{Regret matches to UCB-D3 of \cite{ucbd3} in stationary setup}}: We compare the regret of \texttt{DNCB} with that of the non-stationary UCB-D3 of \cite{ucbd3}. In the stationary environment ($\delta=0$), the definition of gap is invariant with time. For Agent 2, from \citep[Corollary 2]{ucbd3}, we obtain the regret to be $\mathcal{O}[\frac{1}{\rho^2}(k-1) \log T]$, where $\rho$ is the stationary dominated gap. Note that this is exactly same as Theorem~\ref{thm:two} (except for a mildly worse dependence on $k$). Hence, we recover the order-wise optimal regret in the stationary setting.

\vspace{-3.5mm}
\section{\texttt{DNCB} Algorithm with $N$ competing agents}
\label{sec:gen}
\vspace{-3mm}
In this section, we extend \texttt{DNCB} for $N$ agents. We stick to the setup where the $r$-th Agent is ranked $r$ and focus on the learning algorithm of the $r$-th agent. Let us fix some notation. We denote $C_t(i) \in \{\phi,1,\ldots,k\}$ as the arm committed by Agent $i$ at time $t$. For Agent $r \in [N]$, we (sequentially) define $\mathcal{C}_t(r) = \{C_t(1),\ldots,C_t(r-1)\}$ as the set of committed (dominated) arms by agents ranked higher that $r$. The learning scheme is presented in Algorithm~\ref{algo:gen}

\begin{algorithm}[t!]
   \caption{\texttt{DNCB} for $r$-th Agent}
   \label{algo:gen}
\begin{algorithmic}
   \STATE {\bfseries Input:} Horizon $T$, drift limit $\delta$
   \STATE Initialize $S_r^{(\Omega)}= \phi$ for all $\Omega \subseteq [k]$, and $|\Omega| \leq r-1$,  Initialize $i_r \gets 1$,   $\mathcal{C}_1(r) = \phi$
   \STATE \texttt{RANK ESTIMATION()}
   \FOR{$t=1,2,\ldots,T$}
   \STATE \underline{\textbf{\texttt{Update State $z_t(r)$:}}} \\
   \textbf{if} $|\mathcal{C}_t(r)| < r-1$ \textbf{then},  $z_t(r) \gets \texttt{Explore}-\mathcal{C}_t(r)$\\
   \textbf{else}
   \textbf{if} $\exists (x,s) \in S_r^{(\mathcal{C}_t(r))}$ s.t. $s>t$, $z_t(r) \gets \texttt{Exploit}(x)$ \textbf{else} $z_t(r) \gets \texttt{Explore}-\mathcal{C}_t(r)$ \\
   \textbf{if} $z_t(r) \neq z_{t-1}(r)$ \textbf{then} $i_r \gets i_r +1, \,\,\, t_{i_r} \gets t$
   \STATE \underline{\textbf{\texttt{Pull-Arm by Agent r:}}}
   \STATE \textbf{Case I:} if $z_t = \texttt{Explore}-\mathcal{C}_t(r)$, pull $t+(r-|\mathcal{C}_t(r)|-1) \% (k - |\mathcal{C}_t(r)|)$ smallest arm in $[k] \setminus \mathcal{C}_t(r)$ (Play round robbin with $[k] \setminus \mathcal{C}_t(r)$ \STATE \textbf{Case II:} if $z_t = \texttt{Exploit}(x)$, then play arm $x$.
   \STATE \underline{\textbf{\texttt{Test by Agent $r$:}}}
   \FOR{$\Omega \subseteq [k]$ s.t. $|\Omega| =r-1$ and Agent $r$ is in \texttt{Explore}-$\mathcal{C}_t(r)$ }
   \IF {$\exists a \in [k] \setminus \Omega$ and $\Tilde{\lambda}$ s.t. $a >_{\Tilde{\lambda}} b$ for $b \in [k] \setminus \{\Omega \cup \{a\}\}$}
   \STATE $\tau_{i_r}^{(\Omega)} \gets t - t_{i_r}$, $\text{buf}_r = \frac{2}{\delta}\sqrt{(k-|\mathcal{C}_t(r)|)\frac{\log T}{\tau_{i_r}^{(\Omega)}}}$,  define $\Bar{t}_{i_r} = \min \{t_{i_r} + \text{buf}_{i_r}, t_{i_{r-1} +1} \} $
   \STATE \quad \textbf{if} $\text{buf}_{i_r} > \tau_{i_r}$, \textbf{then} $S_r^{\Omega} \gets S_r^{\Omega} \cup \{(a,\Bar{t}_{i_r})\}$, \textbf{else} $i_{i_r} \gets i_{r} +1,  t_{i_r} \gets t$
   \ENDIF
   \ENDFOR
   \STATE \underline{\textbf{\texttt{Update Black Board:}}}
   \STATE Updates $\mathcal{C}_{t+1} = \{x\in [k]: \exists s > t+1,  \,\, \text{and} \,\, \exists j \leq r-1 \,\,\, s.t. \,\,\, (x,s,j) \,\, \text{exists on board} \}
   $
   \STATE if $\exists (x,s)$ s.t. $s\geq t+1$, write $(x,\Bar{t}_{i_t},r)$ on the board
   \ENDFOR
\end{algorithmic}
\end{algorithm}
\textbf{\texttt{RANK ESTIMATION()}}  We start with the rank estimation which takes $N-1$ time steps. At $t=1$, all agents pull arm $1$. Subsequently, for $t\in [2, N-1]$, agents, never matched to any arms play arm $t$, and the agents who were matched to an arm, continues to play the matched arm. By inductive reasoning, one can observe that this collision routine ensures that all agents know their own rank.

We denote $\{t_{i_r}\}_{i_r =1,2,..}$ as the start epochs for Agent $r$. To identify the state of Agent $r$ we define 
$z_t(r) = \{ \texttt{Explore}-\mathcal{C}, \texttt{Exploit}(x)\}$, where in $\texttt{Explore}-\mathcal{C}$, the $r$-th agent plays in a round round robbin on the set of $[k] \setminus \mathcal{C}$ arms, and in $\texttt{Exploit}(x)$ it pulls arm indexed by $x$.

At any time $t$, Agent $r$ first looks at the black-board, and using the information, it constructs a dominated set $\mathcal{C}_t(r)$, which contains all the committed arms from Agents 1 to $r-1$. Based on $\mathcal{C}_t(r)$, Agent $r$ updates $z_t(r)$ to reflect whether it is in \texttt{Explore}-$\mathcal{C}_t(r)$ phase, or in the exploit phase. In particular, Agent $r$ gets to commit on an arm in $[k] \setminus \mathcal{C}_t(r)$, if all the higher ranked agents have already committed, i.e., $|\mathcal{C}_t(r)| = r-1$. A new phase is spawned for Agent $r$ if either the dominated set $\mathcal{C}_t(r) \neq \mathcal{C}_t(r-1)$ changes, or its own phase ends. Both this cases are captured by $z_t(r)$, and hence, based on whether $z_t(r)$ changes or not, Agent $r$ decides to start a new phase.

The test procedure of Agent $r$ is similar to that of Agent 2, with a difference that Agent $r$ tests in the arms in the subset $[k] \setminus \mathcal{C}_t(r)$, and hence the buffer length is accordingly designed. We also need to ensure that Agent $r$ ends its exploitation phase when any higher ranked agent starts exploring. This is ensured by defining $\Bar{t}_{i_r}$.

\textbf{\textit{Saving extra exploration:}} Furthermore, Agent $r$ constructs the sets $S_r^\Omega$ for all $\Omega \subseteq [k]$ with $|\Omega| = r-1$. As explained in the 2 agent case, this saves extra exploration for Agent $r$, because if the statistical test succeeds on an arm $j \in [k] \setminus \mathcal{C}_t(r)$, and there exists $\Omega$, with $|\Omega|=r-1$ such that $S_r^\Omega$ is non-empty, Agent $r$ immediately commits to arm $j$.
\vspace{-3mm}
\subsection{Regret Guarantee}
\vspace{-2mm}
We characterize the regret of $r$-th agent, with $r \geq 2$. Note that the regret of Agent $1$ will be identical as Theorem~\ref{thm:two}, since it faces no competition and hence no collision.

The regret of $r$-th Agent will depend on the dynamic gap of Agents 1 to $r-1$, and hence to ease, we define
\begin{align*}
   \Delta_t[r] = \!\! \min_{\mathcal{C}\in [k], |\mathcal{C}| \leq r-2} \lambda^\mathcal{C}_{t}[r], \hspace{2mm} \Tilde{\Delta}_t[r] = \!\! \min_{\mathcal{C}\in [k], |\mathcal{C}| \leq r-1} \lambda^\mathcal{C}_{t}[r]
\end{align*}
Note that the definitions of $\Delta_t[r]$ and $\Tilde{\Delta}_t[r]$ are generalizations of $\lambda^\mathcal{C}_t[r]$, with further restrictions on the dominated set $\mathcal{C}$. With this, we have the following result:
\begin{theorem}[$N$ agent \texttt{DNCB}]
\label{thm:gen}
Suppose we run Algorithm~\ref{algo:gen} for $N$ agents with $\delta$ drift. The regret for $r$-th ranked agent is given by
\small
\vspace{-1mm}
\begin{align*}
    R_r &\leq C \sum_{\ell=1}^m ( \frac{k}{k-r+2} )^{1/3} \Bigg \lbrace  \left[ \frac{ k\log T}{\Delta_{\min,\ell}[r]} + \frac{ k\log T}{\Delta^2_{\min,\ell}[r-1]} \right]  + \left\lceil \left ( \frac{k-r+2}{k-r+1} \right)^{1/3} \right \rceil \left( \frac{1}{\Tilde{\Delta}_{\min,\ell}[r]} \right) (k-r+1) \log T \Bigg \rbrace,
\end{align*}
\normalsize
where we divide the horizon $T$ in $m$ blocks,  having at most $\min\{ c \, \delta^{-2/3} k^{1/3} \log^{1/3} T, T \}$ length. Here $\Delta_{\min,\ell}[r] = \min_{t \in \ell \,\, block} \Delta_t[r]$ and $\Tilde{\Delta}_{\min,\ell} = \min_{t \in \ell \,\, block} \Tilde{\Delta}_t[r] $ denotes the gap for $\ell$-th block.
\end{theorem}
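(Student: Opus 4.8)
The plan is to establish Theorem~\ref{thm:gen} by induction on the rank $r$, treating Theorem~\ref{thm:two} (and the single-agent \texttt{SnoozeIt} bound for Agent~$1$) as the base cases and propagating the control on exploration from rank $r-1$ to rank $r$. The first ingredient, common to all ranks, is the block decomposition: partition $[T]$ into $m$ blocks of length $B=\min\{c\,\delta^{-2/3}k^{1/3}\log^{1/3}T,\,T\}$. This length is forced by the commit rule, since committing requires $\text{buf}_r>\tau^{(\Omega)}_{i_r}$, i.e. $\frac{2}{\delta}\sqrt{(k-|\mathcal C|)\log T/\tau}>\tau$, which caps any exploration segment at $\tau=O(\delta^{-2/3}(k-|\mathcal C|)^{1/3}\log^{1/3}T)\le B$ and makes the subsequent exploit segment of comparable length. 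Consequently each block contains only $O(1)$ explore-exploit cycles of any fixed agent, and the drift over a block is $\delta B=O(k^{1/3}\log^{1/3}T)$, small enough that the dynamic gaps $\lambda^{\mathcal C}_t[\cdot]$ can be replaced by their block minima $\lambda^{\mathcal C}_{\min,\ell}$ at the cost of universal constants. Because Agent~$r$ works on the smaller set $[k]\setminus\mathcal C_t(r)$, its own phase length is shorter than $B$ by a factor $((k-r+2)/k)^{1/3}$, so Agent~$r$ opens $O((k/(k-r+2))^{1/3})$ phases per block, which is the source of the common prefactor $(k/(k-r+2))^{1/3}$.

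Next I would set up the high-probability clean event on which every invocation of the $(\tilde\lambda,\mathcal A)$-optimality test of Definition~\ref{defn:lambda_better} is correct: whenever the test certifies an arm on $\mathcal A=[k]\setminus\mathcal C_t(r)$ it is genuinely near-best for Agent~$r$ on that set, and it is guaranteed to fire once the per-arm window reaches $w_k(\lambda^{\mathcal C}_t[r])=c_0\log T/(\lambda^{\mathcal C}_t[r])^2$. This is the \texttt{SnoozeIt} concentration argument transported to the restricted arm set, with the $-\delta$ slack and the window length calibrated to absorb drift across a window. Since Agent~$r$ does not know which arms higher agents will eventually commit, all bounds must hold uniformly over admissible dominated sets, which is exactly why the worst-case gaps $\Delta_t[r]=\min_{|\mathcal C|\le r-2}\lambda^{\mathcal C}_t[r]$ and $\tilde\Delta_t[r]=\min_{|\mathcal C|\le r-1}\lambda^{\mathcal C}_t[r]$ appear. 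On the clean event, the \texttt{SnoozeIt} per-block regret applied to a productive exploration segment (one on which Agent~$r$'s own test has not yet succeeded) on $m'$ arms with block-gap $\lambda$ contributes $O(m'\log T/\lambda)$; taking $m'\le k$, $\lambda=\Delta_{\min,\ell}[r]$ for segments with $|\mathcal C_t(r)|\le r-2$ yields $\frac{k\log T}{\Delta_{\min,\ell}[r]}$, and taking $m'=k-r+1$, $\lambda=\tilde\Delta_{\min,\ell}[r]$ for segments with $|\mathcal C_t(r)|=r-1$ yields $\frac{(k-r+1)\log T}{\tilde\Delta_{\min,\ell}[r]}$. The extra $\lceil((k-r+2)/(k-r+1))^{1/3}\rceil$ multiplying this last term counts the epochs Agent~$r$ opens on the fully-restricted set inside a single exploitation phase of the agent immediately above it, exactly as in the $2$-agent epoch-counting argument.

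The core of the proof is the forced-exploration term $\frac{k\log T}{\Delta^2_{\min,\ell}[r-1]}$. I would define Agent~$r$ to be in forced exploration when its own test has already succeeded on $[k]\setminus\mathcal C_t(r)$ but it still cannot commit because $|\mathcal C_t(r)|<r-1$; each such step costs $O(1)$ regret (Agent~$r$ is not exploiting and would collide if it committed), so the forced-exploration regret equals, up to constants, the number of such steps. By the serial-dictatorship structure the event $\{|\mathcal C_t(r)|<r-1\}$ coincides with the event that Agent~$r-1$ is \emph{effectively exploring}, giving the recursion that Agent~$r$'s forced-exploration time is dominated by Agent~$r-1$'s total effective-exploration time. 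The inductive hypothesis bounds the latter, per block, by $O\!\big((k-r+2)\log T/\Delta^2_{\min,\ell}[r-1]\big)=O\!\big(k\log T/\Delta^2_{\min,\ell}[r-1]\big)$, and multiplying by the $O(1)$ per-step cost and the per-block phase count $(k/(k-r+2))^{1/3}$ gives this term. Summing the three contributions over the $m$ blocks produces the stated bound.

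The main obstacle is precisely ensuring that this recursion stays one step deep, so that Agent~$r$'s forced exploration is governed by the single gap $\Delta[r-1]$ rather than the naive cascade $\sum_{j<r}\Delta^{-2}[j]$ accumulated over the whole horizon. The block construction is the tool that makes this work: $B$ is chosen large enough that every agent completes its intended explore-exploit cycle within a block, yet small enough that $\delta B$ does not spoil the block-constant-gap approximation, so that within any one block only $O(1)$ phases of each agent occur and the effective-exploration time of Agent~$r-1$ is covered by $O((k/(k-r+2))^{1/3})$ windows, each of length $O(k\log T/\Delta^2_{\min,\ell}[r-1])$. Verifying this requires showing that the synchronization rule $\bar t_{i_r}=\min\{t_{i_r}+\text{buf}_{i_r},\,t_{i_{r-1}+1}\}$, which truncates Agent~$r$'s exploitation the moment the agent above resumes exploring, together with the blackboard-maintained dominated set $\mathcal C_t(r)$, keeps Agent~$r$'s phase boundaries aligned to those of Agent~$r-1$, so that the recursion closes within each block and the sum over blocks carries the unravelled dependence on the higher ranks.
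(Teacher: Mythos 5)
Your proposal follows essentially the same route as the paper: the same block decomposition of length $\min\{c\,\delta^{-2/3}k^{1/3}\log^{1/3}T,\,T\}$, the same good-event/test-correctness argument transported to restricted arm sets, the same per-phase-of-Agent-$(r-1)$ decomposition into own exploration, forced exploration bounded by Agent $(r-1)$'s exploration duration $O(k\log T/\Delta^2_{\min,\ell}[r-1])$, and epoch counting inside Agent $(r-1)$'s exploit phase yielding the $\lceil((k-r+2)/(k-r+1))^{1/3}\rceil$ factor. The only minor slip is the attribution of the outer prefactor: the paper obtains $(k/(k-r+2))^{1/3}$ by counting phases of Agent $r-1$ per block (its minimum phase length being $\Omega(\delta^{-2/3}(k-r+2)^{1/3}\log^{1/3}T)$ since it explores at least $k-r+2$ arms), not phases of Agent $r$, which is the consistent choice given that the whole per-block bound is assembled phase-by-phase of Agent $r-1$; otherwise the accounting matches.
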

\textbf{Discussion:} The performance of Agent $r$ depends crucially on Agent $r-1$, and based on whether Agent $r-1$ is exploring or exploiting, the regret depends on the higher ranked $r-2$ agents. Hence, the dynamic gap depending on both $r-1$ and $r-2$ sneaks in the regret expression via $\Delta_t[r]$ and $\Tilde{\Delta}_t[r]$.

\textbf{\textit{Special case, $r=2$:}}
When $r=2$ in Theorem~\ref{thm:gen}, we exactly get back the regret of Agent 2 in Theorem~\ref{thm:two}. So, for Agent 2, there is no additional cost for extending \texttt{DNCB} to $N$ agents.

\textbf{\textit{Different terms:}} Similar to the 2 agent case, the first term presents the regret from exploration of Agent $r$, when Agent $r-1$ is exploring. Hence, the size of the dominated set is at most $r-2$. Similarly, the second term corresponds to the \emph{forced exploration} of Agent $r$. Note that this depends on how complex the system of Agent $r-1$ is. Furthermore, the third term corresponds to the regret when Agent $r$ has committed, and hence the size of dominated set is at most $r-1$. These are characterized by $\Delta_t[r]$ and $\Tilde{\Delta}_t[r]$.

Note that we characterize the regret of Agent $r$ by focusing on one phase of Agent $r-1$ (similar to the 2 agent case), and we show that the number of epochs of Agent $r-1$ in one block is at most $4 \lceil [k/(k-r+2)]^{1/3} \rceil$, which causes the multiplicative pre-factor. Note that with $r=2$, the factor is absent, since the blocks are designed to contain at most 2 epochs of Agent 1.

\textbf{\textit{Matches UCB-D3 of \cite{ucbd3} in stationary setup}:} Note that, in the stationary setup ($\delta=0$), the regret expression in Theorem~\ref{thm:gen} matches to that of UCB-D3 (except a mildly weak dependence on $k$), which is shown to be order optimal. So, \texttt{DNCB} recovers the optimal regret in the stationary case.
\begin{figure*}[t!]
\centering
    \subfloat[][]{
    \includegraphics[width=0.185\linewidth]{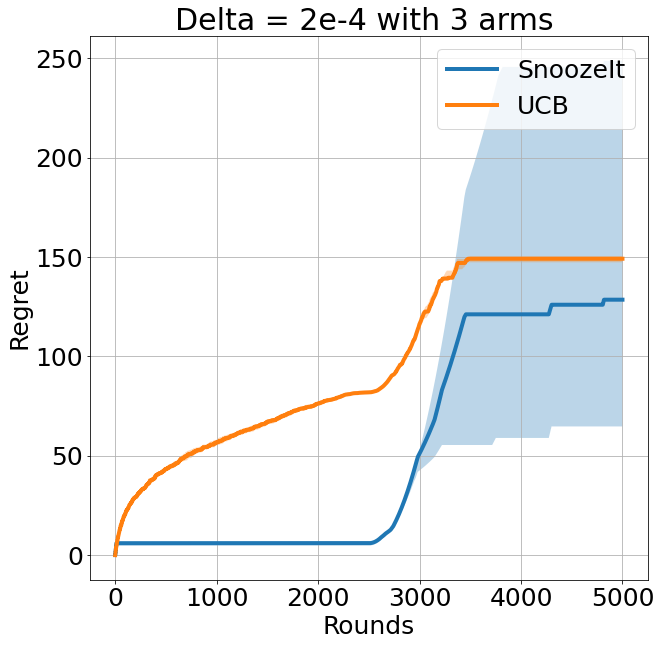}
    \label{fig:single_agent}
    }
    \subfloat[][]{
    \includegraphics[width=0.185\linewidth]{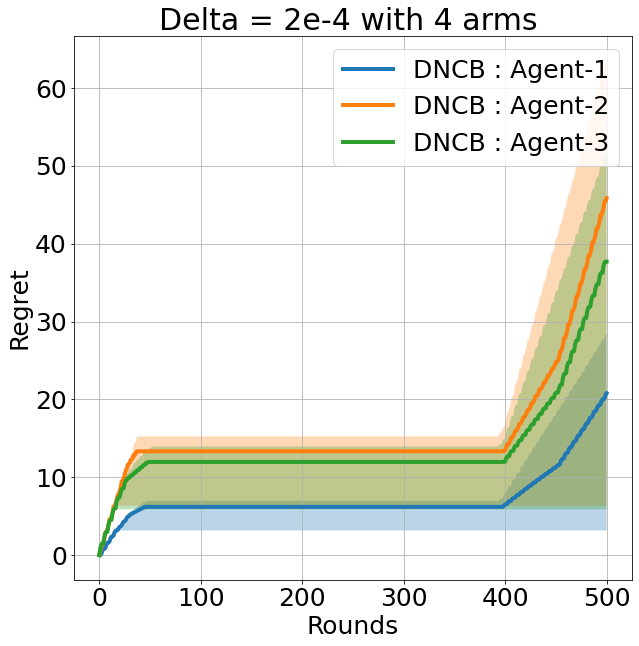}
    \label{fig:multi_agent}
    }
    \subfloat[][]{
    \includegraphics[width=0.185\linewidth]{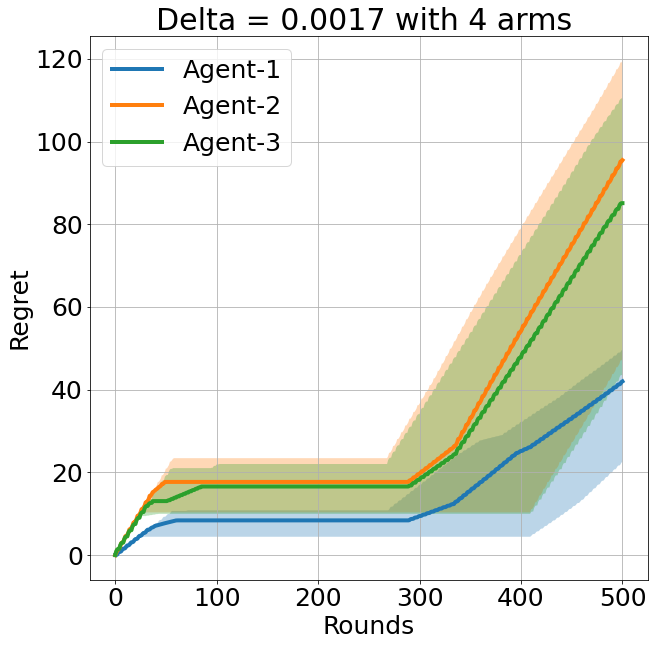}
    \label{fig:multi_agent_2}
    }
    \subfloat[][]{
    \includegraphics[width=0.185\linewidth]{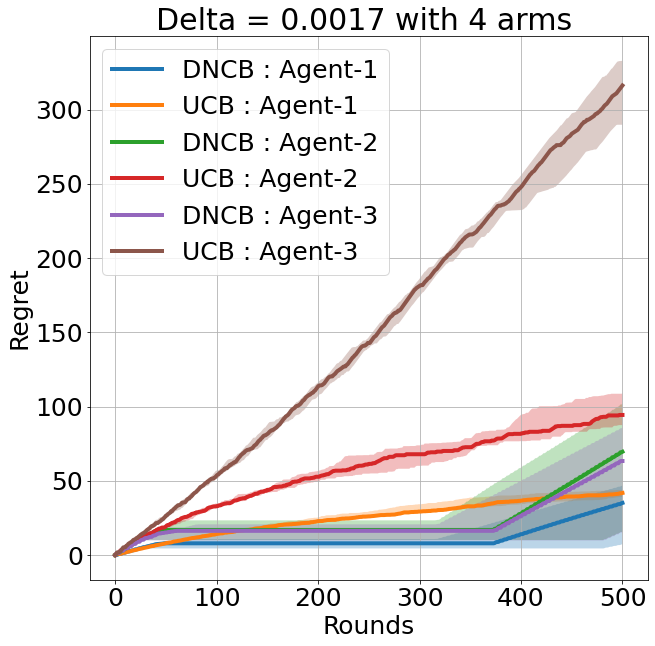}
    \label{fig:dncb_d3}
    }
    \subfloat[][]{
    \includegraphics[width=0.185\linewidth]{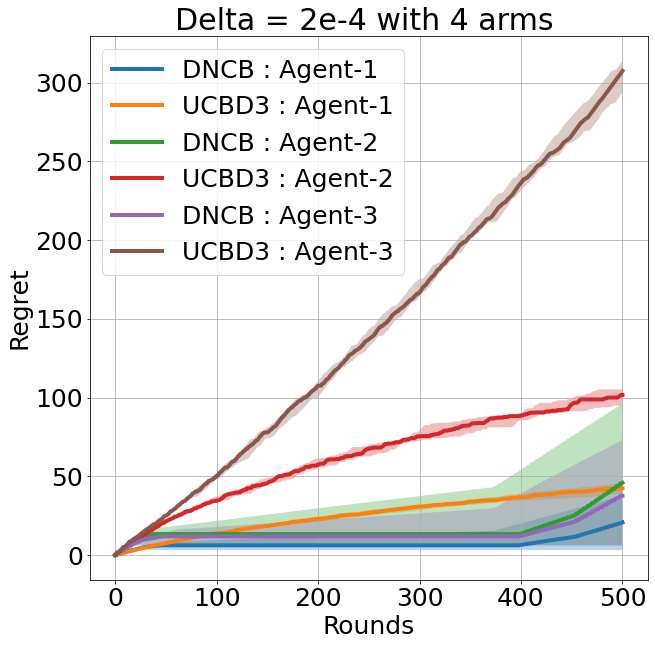}
    \label{fig:dncb_d3_2}
    }
    \caption{\label{fig:simualtions}In $(a)$, we compare SnoozeIT and UCB with $k=3$. In $(b)$ and $(c)$ DNCB on a system with 3 agents and 4 arms is simulated, and the same systems are comapred with UCB-D3 in $(d)$ and $(e)$}
\end{figure*}
\vspace{-4mm}
\section{\texttt{DNCB} without Black Board}
\label{sec:board}
\vspace{-3mm}
Upto now, we present \texttt{DNCB} with a black board, via which the agents communicate among themselves. In this section, we remove this, and obtain the same information via collision. We emphasize that without the black board, the learning algorithm is \emph{completely} decentralized.
\vspace{-3mm}
\subsubsection{Special case: Black board removal with $N=2$}
\vspace{-2mm}
In the presence of the black board, Agent 2 knows whether Agent 1 is exploring (on all $[k]$ arms) or committed to an particular arm.
The same information can be gathered from a collision. 
Agent 2 maintains a latent variable $z_t$, which indicates whether Agent 1 is in \texttt{Explore} or \texttt{Exploit} phase. At the beginning, $z_0 \gets \texttt{Explore}$.

If at round $t$, Agent 2 faces a collision on arm $j$, one of two things can happen---(a) Agent 1 has (ended exploring and) committed to arm $j$ or (b) Agent 1 (has ended its exploitation and) is exploring. This is true from the design of \texttt{DNCB}. After a collision, Agent 2 looks at $z_{t-1}$. If $z_{t-1} = \texttt{Explore}$, then case (a) has happened and if $z_{t-1} = \texttt{Exploit}$, then case (b) has happened. So, just toggling the variable $z_t$ is enough for Agent 2 to keep track of Agent 1. It is easy to see that, from the round robbin structure of exploration, that after Agent's 1 phase changes, it may take upto $k$ time steps for a collision to take place.
\begin{lemma}[Regret Guarantee]
\label{rem:slow}
Suppose $\delta' \leq \frac{c}{k} \delta$, for a constant $c <1$. Then, for a $\delta'$ shifted system,  \texttt{DNCB} without blackboard satisfies the regret guarantees identical to that of Theorem~\ref{thm:two} (with $\delta$). 
\end{lemma}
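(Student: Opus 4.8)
The plan is to exhibit a coupling between the blackboard and blackboard-free runs of \texttt{DNCB} on the same reward realizations, and to show that removing the board changes nothing except introducing a bounded \emph{detection delay}, whose cost is fully absorbed by the hypothesis $\delta' \le \frac{c}{k}\delta$. The only operational difference is that Agent~2 now infers Agent~1's phase transitions (\texttt{Explore}$\to$\texttt{Exploit} and \texttt{Exploit}$\to$\texttt{Explore}) from collisions rather than reading them off the board. As noted just before the statement, the round-robin structure of exploration guarantees that each such transition is detected within at most $k$ time steps. Hence I would first argue that the sequence of \emph{logical} phases of Agent~2 — the states $z_t(2) \in \{\texttt{Explore ALL}, \texttt{Explore}-j, \texttt{Exploit}(x)\}$ together with the committed set it infers — produced without the board is identical to the blackboard run, with every phase boundary of Agent~2 delayed by some $d_t \le k$.

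The crux is to show that this delay never corrupts the statistical decisions, and here is where the drift condition enters. Over any detection window of length at most $k$, the true means of the $\delta'$-system change by at most $k\delta' \le c\,\delta < \delta$. Consequently the staleness of Agent~2's knowledge of Agent~1's committed arm amounts, in accumulated mean-change, to less than a single nominal $\delta$-step of the system for which the algorithm is calibrated. I would feed this bound into the $(\tilde\lambda,\mathcal A)$-optimality test of Definition~\ref{defn:lambda_better}: the explicit $-\delta$ slack in the test, together with the $\delta$-calibrated buffer length $\text{buf}_2$ and the cut-off $\bar t_{i_2}$, continue to dominate the true drift \emph{plus} the delay-induced error once the latter is charged as at most $c\delta$ of extra effective drift. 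This shows that all the ``good events'' underlying the proof of Theorem~\ref{thm:two} — accuracy of the tests, correctness of each commitment on $[k]\setminus\mathcal{C}_t(2)$, and the guarantee that Agent~2 never exploits an arm into a region where Agent~1 has (re)started exploring — hold with the same high probability as in the blackboard analysis.

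It then remains to charge the extra regret incurred strictly during the delay windows. During each delayed transition, Agent~2 may collide or play a sub-optimal arm for at most $k$ steps, contributing $O(k)$ regret per transition of Agent~1. Since each block contains at most two phases of Agent~1 and, as in the proof of Theorem~\ref{thm:two}, the number of Agent~2 epochs inside one Agent~1 phase is $O((k/(k-1))^{1/3})$, the total delay cost summed over the $m$ blocks is $O(mk)$. This is of strictly lower order than the per-block exploration regret $\Theta(k\log T/\lambda_{\min,\ell}[2])$ already present in Theorem~\ref{thm:two}, so it is absorbed into the universal constant. Combining the coupling with this bounded overhead yields the claimed bound, identical (up to the constant) to Theorem~\ref{thm:two} evaluated at $\delta$.

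I expect the main obstacle to be the middle step: rigorously propagating the $k\delta'$ drift bound through the window-based estimators $\hat\mu_{a,t}(\tilde w)$ and the test margin to certify that the delayed, stale view of $\mathcal{C}_t(2)$ can never flip the outcome of an optimality test or cause Agent~2 to commit to a dominated arm. The factor $c<1$ is exactly the slack that has to be shown sufficient here; verifying that it leaves enough room after the detection delay is charged as effective drift is the technically delicate part, whereas the coupling and the regret bookkeeping are routine once this robustness is established.
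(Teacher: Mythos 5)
Your proposal follows essentially the same reduction argument as the paper: Agent~2 infers Agent~1's state from collisions within at most $k$ time steps, and the hypothesis $\delta' \le \frac{c}{k}\delta$ ensures the drift accumulated over this detection window is at most $c\delta < \delta$, so the blackboard-free $\delta'$-system inherits the guarantees of Theorem~\ref{thm:two} with drift $\delta$ as a corollary. Your extra bookkeeping of the $O(mk)$ regret incurred inside the delay windows makes explicit a point the paper's very terse proof leaves implicit, but it does not constitute a different approach.
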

\vspace{-2mm}
For a $\delta$ shifted system, upto time $k$, the maximum total shift is $k \delta$, and hence with $\delta'$, we ensure that the system remains stationary in these $k$ time steps. We emphasize that \texttt{DNCB} is an asynchronous algorithm, and hence, without black board, we require an even slower varying system to maintain stationarity.

\vspace{-2.5mm}
\subsubsection{Black board removal with $N$ agents}
\vspace{-2mm}
\texttt{DNCB} is an asynchronous algorithm, and hence establishing coordination between agents is quite non-trivial. In previous works, such as \cite{ucbd3}, the learning includes a fixed set of time slots for communication among agents. This coordinated communication can not be done for \texttt{DNCB}, since the phases start and end at random times. Hence, to handle this problem, we consider a slightly stronger reward model.

\textbf{Reward model:} To ease communication across agents, we assume that in case of collision, the reward is given to the highest ranked agent, and all the remaining agent gets zero reward, as well as the index of the agent who gets the (non-zero) reward. We remark that this side information is not impratical in applications like college admissions, job markets etc., and this exact reward model is also studied in \cite{liu2021bandit}.

Under this new reward model, Agent $r$ maintains a set of latent variables, $Q_t[s]$ for all $s \in [r-1]$, where $Q_t[s] \in \{\texttt{Explore}, \texttt{Exploit}\}$. If at time $t$, Agent $r$ experiences a collision, and the reward goes to an Agent $r'$, with $r'<r$, then Agent $r$ toggles $Q_t[r']$. In this way, after a collision on arm $j$, Agent $r$ knows that either Agent $r'$ has committed on arm $j$ or it is exploring on a set of arms including $j$---and based on $Q_{t-1}[r']$, Agent $r$ knows which event has happened exactly. From the round robbin nature of exploration, detecting this may take at most $k$ steps.
\begin{lemma}[Regret Guarantee]
Suppose $\delta' \leq \frac{c}{k} \delta$, for a constant $c<1$. For a $\delta'$ shifted system,  \texttt{DNCB} without blackboard satisfies the regret guarantees identical to that of Theorem~\ref{thm:gen} (with $\delta$).
\label{remark:remove_black_board}
\end{lemma}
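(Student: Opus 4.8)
The plan is to reduce the black-board-free $N$-agent algorithm to the black-board version underlying Theorem~\ref{thm:gen}, exactly as Lemma~\ref{rem:slow} reduces the two-agent case, but now tracking all $r-1$ higher-ranked agents simultaneously. Under the stronger reward model, every collision reveals the index $r'$ of the winning (higher-ranked) agent, so Agent $r$ can maintain the indexed latent variables $Q_t[r']$ and toggle only the revealed coordinate $r'$. First I would establish \emph{reconstruction correctness}: by the same case analysis as in the two-agent argument, a collision on arm $j$ whose reward goes to $r'$ means either (a) $r'$ has just committed to $j$, or (b) $r'$ has just ended its exploitation of $j$ and resumed exploring, and inspecting $Q_{t-1}[r']$ disambiguates the two cases. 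Consequently, the committed arm of every higher-ranked agent, and hence the dominated set $\mathcal{C}_t(r)$ that Agent $r$ would have read from the board, can be reconstructed from the sequence of collisions.

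Next I would bound the detection delay by $k$, uniformly over all higher-ranked agents. By the round-robin structure of \texttt{Explore}-$\mathcal{C}$ over a set of size at most $k$, after any agent $r'$ changes phase a collision revealing this change must occur within $k$ steps: if Agent $r$ is itself exploring, the two round-robins sweep every arm within $k$ rounds, while if Agent $r$ is parked on a single arm during \texttt{Exploit}, the newly-exploring higher agent sweeps over that arm within $k$ rounds. In either case the delay is at most $k$ and is incurred independently for each $r'<r$ through the separate coordinates of $Q_t[\cdot]$, so the delays are tracked in parallel and do not compound down the rank hierarchy.

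The core of the reduction is then the \emph{staleness-control} step via slowing. Over any detection window of length at most $k$, each mean $\mu_{j,\ell,t}$ drifts by at most $k\delta' \le c\,\delta < \delta$. Thus the delayed reconstruction produced in a $\delta'$-shifted system is never more stale than the instantaneous black-board information in a genuine $\delta$-shifted system: the $(\tilde\lambda,\mathcal{A})$-optimality test of Definition~\ref{defn:lambda_better} and the buffer lengths $\text{buf}_r$ are already designed to tolerate a per-step drift of $\delta$, so feeding the reconstructed dominated set (aggregately stale by at most $k\delta' < \delta$) into those tests leaves every test outcome and commitment duration valid. Plugging this into the proof of Theorem~\ref{thm:gen} then yields the identical regret bound stated there with parameter $\delta$.

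I expect the main obstacle to be controlling the transient disagreements between the reconstructed set $\mathcal{C}_t(r)$ and its true value when several higher-ranked agents change phase inside a single detection window. The hard part is to show that (i) these disagreements are confined to windows of length $O(k)$, (ii) the reconstruction delay for the entire set $\mathcal{C}_t(r)$ remains $O(k)$ rather than growing like $O(kr)$ --- which follows because the indexed toggles resolve each agent's state in parallel rather than sequentially --- and (iii) the additional regret charged to these $O(k)$-length transient windows is absorbed by the factor-$c/k$ slowdown, leaving the leading-order bound of Theorem~\ref{thm:gen} unchanged. Formalizing the non-compounding of delays across the recursive rank structure, and verifying that a transiently incorrect dominated set cannot trigger an erroneous long commitment, is where the bulk of the technical care is needed.
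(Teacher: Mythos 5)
Your proposal is correct and follows essentially the same reduction the paper uses: a collision-based reconstruction of the dominated set with detection delay at most $k$, combined with the slowdown $\delta' \leq \frac{c}{k}\delta$ so that the drift accumulated during the delay is at most $c\delta < \delta$, reducing the problem to the black-board setting of Theorem~\ref{thm:gen}. Your write-up is in fact considerably more detailed than the paper's own (very terse) argument, particularly on the parallel, non-compounding tracking of the states of all $r-1$ higher-ranked agents via the indexed variables $Q_t[\cdot]$.
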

\vspace{-2mm}
The above remark holds under the modified and stronger reward model. Design of an efficient coordination protocol in an asynchronous system is left to future work. 

\vspace{-4mm}
\section{Simulations}
\vspace{-3.5mm}

In Figure~\ref{fig:simualtions}, we demonstrate the effectiveness of DNCB on synthetic data. In Figure \ref{fig:single_agent}, we observe that when the environment is varying, {\ttfamily SnoozeIt} outperforms vanilla UCB algorithm. 
In Figures \ref{fig:multi_agent} and \ref{fig:multi_agent_2}, we simulate DNCB on two instances with different arm-means and dynamics. We can observe from these plots that the exploitation time of agent $2$ is strictly within that of agent $1$, and similarly that of agent $3$ is strictly within that of agent $2$. This visually captures the notion of forced explorations, where an agent can only exploit arms, when all higher ranked agents are themselves exploiting arms. 

In Figures \ref{fig:dncb_d3} and \ref{fig:dncb_d3_2}, we compare the performance of DNCB with that of UCB-D3 in a dynamic environment. We find that although the performance of agent $1$ is similar in the two systems, the performance of the lower ranked agents are much superior in DNCB compared to UCB-D3. This shows that DNCB is sensitive to the potential variations in arm-means and helps all agents adapt faster compared to UCB-D3 which is designed assuming the environment is stationary. 
The exact details on the experiment setup is given in the Supplementary materials in Section \ref{sec:additional_experiments}.




\vspace{-4mm}
\section{Conclusion and open problems}
\vspace{-3mm}
We introduced the problem of decentralized, online learning in two-sided markets when the underlying preferences vary smoothly over time. This paper however leaves several intriguing open problems: (a) to understand whether the assumption of known $\delta$ be relaxed; (b) extend the dynamic framework to general markets beyond serial dictatorship; and (c) to consider other forms of non-stationary such as piece-wise constant markets or variations with a total budget constraint.


\bibliographystyle{abbrvnat}
\bibliography{markets_bandits_non_stationary}

 \clearpage
 \appendix
 \begin{center}
    \textbf{\Large{Supplementary Material for ``Competing Bandits in Non-Stationary Matching Markets''} }
\end{center}
\vspace{3mm}

\section{Related Works on Non-Stationary Bandits}
The framework on non stationary bandits were introduced in \cite{whittle1988restless} in the framework of restless bandits, and later improved by \cite{Slivkins_adaptingto}. There has been a line of interesting work in this domain--for example in \cite{garivier2011upper,auer2019,liu2018change} the abruptly changing or switching setup is analyzed, where the arm distributions are piecewise stationary and an abrupt change may happen from time to time. In particular \cite{liu2018change} proposes a change point based detection algorithm to identify whether an arm distribution has changes of not in a piecewise stationary environment. Furthermore, in \cite{besbes2014stochastic}, a total variation budgeted setting is considered, where the total amount of (temporal) variation is known, but the change may happen, either smoothly or abruptly.

Moreover, in the above-mentioned total variation budget based non-stationary framework, an adaptive algorithm, that does not require the knowledge of the drift parameter is obtained in \cite{karnin2016multi} for the standard bandit problem and later extended to \cite{luo2018efficient} for the contextual bandit setup. 

On the other hand, there are a different line of research that focuses on the \emph{smoothly} varying non-stationary environment, in contrast to the above mentioned abrupt or total budgeted setup, for example see \cite{wei2018abruptly,krishnamurthy2021slowly}. Note that \cite{wei2018abruptly} modify the sliding window UCB algorithm of \cite{garivier2011upper} and employ windows of growing size. On the other hand, very recently \cite{krishnamurthy2021slowly} analyzed the \emph{smoothly} varying framework by designing windows of dynamic length and test for optimality within a sliding window. The algorithm of \cite{krishnamurthy2021slowly}, namely Snooze-IT, is an asynchronous algorithm that works on repeated Explore and Commit (ETC) type principle where the explore and commit times are random. 

In this paper, we work with the smoothly varying non-stationary framework of \cite{krishnamurthy2021slowly}. We choose this algorithm because of its simplicity, and the dynamics and competition that comes out of a market framework is better understood in such a sliding window based Explore and Commit type algorithm. In general, we believe that our basic principle can be adapted to any sliding window based algorithm in a non-stationary environment.

\section{Proof of Theorem~\ref{thm:two} }
\label{sec:thm_one}
{\color{black}

\subsection{Technical Preliminaries} 

As is standard in formalizing bandit processes \cite{lattimore2020bandit}, we assume that the random process lies in a probability space endowed with a collection of independent and identically distributed random variables $(U_{i,j}[t])_{i \in [N], j\in[k] t \geq 1}$. For each $i \in [N]$ and $j \in [k]$, and $k\geq 1$, the random variables $(U_i[k])$ is distributed as the $0$ mean, unit variance Gaussian random variable\footnote{Our analysis can be extended verbatim to any sub-gaussian distribution}. With this description, the realized reward by agent $i \in [N]$, when it matches with arm $j \in [k]$ for the $k^{th}$ time at time-index $t$ is given by $U_{i,j}[k]+ \mu_{i,j,t}$. In this description, the set of arm-means $(\mu_{i,j,t})_{i\in[N],j\in[k],t\in[T]}$ are fixed non-random parameters.

\begin{definition}[Good Event]
\label{defn:good_event}
\begin{align*}
    \mathcal{E} := \bigcap_{i=1}^N \mathcal{E}^{(i)},
\end{align*}
where
\begin{align*}
    \mathcal{E}^{(i)} := \bigg\{ \forall t \in [T], \forall j \in [k],  \forall w \leq t, \bigg| \frac{1}{w}\sum_{s=0}^w U_{i,j}[t-s] \bigg| \leq r(w) \bigg\},
\end{align*}
here $r(w):= \sqrt{\frac{8 \log(T)}{w}}$.
\end{definition}

In words, the event $\mathcal{E}$ is the one in which every contiguous sequence of i.i.d. random variables is `well-behaved'. The event $\mathcal{E}^{(1)}$ is identical to the good-event specified for the single agent case in \cite{krishnamurthy2021slowly}. Standard concentration inequalities give that this occurs with high probability which we record in the proposition below.

\begin{proposition}
\begin{align*}
    \mathbb{P}[\mathcal{E}] \geq 1 - \frac{2Nk}{T^2}.
\end{align*}
\label{prop:good_event_high_prob}
\end{proposition}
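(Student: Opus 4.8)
The plan is to prove the bound by a single union bound over the finite family of ``bad'' events, one for each index tuple $(i,j,t,w)$. First I would note that $\mathcal{E}^c = \bigcup_{i\in[N]}(\mathcal{E}^{(i)})^c$, and that each $(\mathcal{E}^{(i)})^c$ is itself the union, over $j\in[k]$, $t\in[T]$ and $w\le t$, of the events $B_{i,j,t,w} := \big\{\, \big|\tfrac{1}{w}\sum_{s=0}^{w} U_{i,j}[t-s]\big| > r(w)\,\big\}$. Since $w$ ranges over $\{1,\dots,t\}$ and $t$ over $[T]$, the total number of such tuples is $\sum_{i,j}\sum_{t=1}^{T} t = Nk\,\tfrac{T(T+1)}{2} \le NkT^2$, so this is a finite, $O(NkT^2)$-sized collection. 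Consequently a plain union bound suffices and no continuous-time maximal inequality is needed.

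Next I would bound $\mathbb{P}[B_{i,j,t,w}]$ for a fixed tuple. Because the $U_{i,j}[\cdot]$ are i.i.d.\ standard Gaussian, the running average $\tfrac{1}{w}\sum_{s} U_{i,j}[t-s]$ is a mean-zero Gaussian with variance $\Theta(1/w)$ (equal to $1/w$ up to the harmless off-by-one coming from the summation index), hence sub-Gaussian with variance proxy on the order of $1/w$. The standard Gaussian tail then gives $\mathbb{P}[B_{i,j,t,w}] \le 2\exp\!\big(-\tfrac{1}{2}\, r(w)^2\, w\big)$, and substituting $r(w)=\sqrt{8\log(T)/w}$ makes the exponent equal to $4\log T$, so that $\mathbb{P}[B_{i,j,t,w}] \le 2T^{-4}$. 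This is precisely the place where the particular constant $8$ in the definition of $r(w)$ is consumed.

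Finally, combining the two steps yields $\mathbb{P}[\mathcal{E}^c] \le NkT^2 \cdot 2T^{-4} = 2Nk/T^2$, which is the assertion. The step I would flag as the crux is the interplay between the quadratic ($\sim T^2$) number of windows per agent–arm pair and the per-window tail: the confidence radius $r(w)$ must be inflated enough (the factor $8$ under the root, rather than the textbook $2$) that each window fails with probability $O(T^{-4})$, leaving exactly two spare powers of $T$ to absorb the window count and still deliver the advertised $T^{-2}$. The only bookkeeping I would verify carefully is the exact number of summands versus the $1/w$ normalization, since this fixes the variance estimate and hence guarantees that the exponent is at least $4\log T$ uniformly in $w\ge 1$; this is routine and does not affect the order of the final bound.
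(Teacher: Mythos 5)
Your proof is correct and follows essentially the same route as the paper's: a Gaussian tail bound showing each fixed window fails with probability at most $2T^{-4}$ (using $\tfrac12 w\,r(w)^2 = 4\log T$), followed by a union bound over the at most $NkT^2$ tuples $(i,j,t,w)$. The off-by-one in the summation range that you flag is present in the paper's statement of the good event as well (the proof itself sums $s=0,\dots,w-1$), and it does not affect the bound.
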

\begin{proof}
Fix a $t \in [T], i \in [N],  j \in [k]$ and  $w \leq t$. Classical sub-gaussian inequality gives that
\begin{align*}
    \mathbb{P}\left[ \bigg| \frac{1}{w}\sum_{s=0}^{w-1} U_{i,j}[t-s] \bigg| > r(w) \right] &\leq 2 \exp \left( -\frac{1}{2} w r(w)^2 \right),\\
    &= \frac{2}{T^4}.
\end{align*}
Now, taking an union bound over $t, i, j$ and $w$ gives that 
\begin{align*}
    \mathbb{P}[\mathcal{E}^{\complement}] \leq \frac{2Nk}{T^2}.
\end{align*}
\end{proof}

The definition of the good event is useful due to the following result. 

\begin{lemma}[No regret in the exploit-phase]
If the good event $\mathcal{E}$ in Definition \ref{defn:good_event} holds, then every agent in every exploit phase will incur $0$ regret.
\label{lem:snoozed_arms_are_sub_optimal}
\end{lemma}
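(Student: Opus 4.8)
The plan is to establish two facts on the good event $\mathcal{E}$ of Definition~\ref{defn:good_event} and then combine them by an induction on rank: (i) whenever an agent is committed to an arm $x$ during an exploit phase, $x$ is the best arm in the currently available set $[k]\setminus\mathcal{C}_t(r)$ at \emph{every} time step $t$ of that phase; and (ii) throughout any exploit phase the agent faces no collision and is therefore matched to $x$. Once both hold and the committed arm is identified with the stable-match arm $\ell_*^{(r,t)}$, the per-step regret $\mu_{r,\ell_*^{(r,t)},t}-\mu_{r,L^{(r)}(t),t}\mathbf{1}_{M_{L^{(r)}(t)}=r}$ vanishes identically during exploitation, which is the claim.

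First I would convert the empirical optimality test of Definition~\ref{defn:lambda_better} into a statement about the true window-averaged means. On $\mathcal{E}$ the average of the noise over the last $\Tilde{w}$ pulls of any arm is at most $r(\Tilde{w})$ in absolute value, so $|\hat{\mu}_{c,t}(\Tilde{w})-\bar{\mu}_{c,t}|\le r(\Tilde{w})$, where $\bar{\mu}_{c,t}$ denotes the average of the true means over the window. If arm $a$ passes the $(\Tilde{\lambda},\mathcal{A})$-test, subtracting these concentration bounds from the test inequality gives
\begin{align*}
\bar{\mu}_{a,t}-\bar{\mu}_{b,t} \;\ge\; \big(\hat{\mu}_{a,t}(\Tilde{w})-\hat{\mu}_{b,t}(\Tilde{w})\big)-2r(\Tilde{w}) \;>\; 2r(\Tilde{w})-\delta \;>\;0
\end{align*}
for every competitor $b\in\mathcal{A}\setminus\{a\}$. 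Thus on $\mathcal{E}$ a successful test certifies a genuine window-averaged gap.

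I would then propagate this gap pointwise across the commit window using smoothness, and handle collisions by the algorithm's coordination. Since consecutive means move by at most $\delta$ and the buffer length is chosen proportional to $\tfrac{1}{\delta}\sqrt{(k-|\mathcal{C}_t(r)|)\log T/\tau}$, the total drift over the exploit duration is dominated by the certified margin $2r(\Tilde{w})-\delta$ — exactly the \texttt{SnoozeIT}-style balance between window length, gap, and buffer — so $a$ stays the true maximizer over $[k]\setminus\mathcal{C}_t(r)$ at every $t$ up to the truncation time $\Bar{t}_{i_r}$. For collisions, an agent enters \texttt{Exploit} only when $|\mathcal{C}_t(r)|=r-1$ (all higher-ranked agents are themselves committed, to arms in $\mathcal{C}_t(r)$), it exploits an arm in the complement $[k]\setminus\mathcal{C}_t(r)$, and its commit is truncated at $\Bar{t}_{i_r}\le t_{i_{r-1}+1}$ so it ends before any higher-ranked agent restarts exploration; any such restart changes $\mathcal{C}_t(r)$ and spawns a fresh phase, terminating the exploit. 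Since lower-ranked agents always avoid the dominated set, agent $r$ is the highest-ranked requester of $x$, hence matched.

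Finally I would close by induction on $r$. For $r=1$ the test runs with $\mathcal{A}=[k]$, so the committed arm equals the global maximizer $\ell_*^{(1,t)}$ throughout; assuming agents $1,\dots,r-1$ are committed to their stable-match arms on agent $r$'s exploit window, the dominated set is exactly $\{\ell_*^{(1,t)},\dots,\ell_*^{(r-1,t)}\}$, whence the best available arm of agent $r$ is $\ell_*^{(r,t)}$, which it plays and is matched to, giving zero per-step regret. I expect the main obstacle to be the pointwise-optimality propagation: turning the window-averaged gap certified by the test into a guarantee at each individual step across a commit whose length may be cut short by a higher-ranked agent, and getting the quantitative balance among $\Tilde{w}$, $\Tilde{\lambda}$, $\delta$, and the buffer right (including reconciling the two constants appearing in $r(\cdot)$ across Definitions~\ref{defn:good_event} and~\ref{defn:lambda_better}). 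The coordination and no-collision part, by contrast, is essentially definitional once the blackboard bookkeeping is unwound.
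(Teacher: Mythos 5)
Your overall strategy --- use the good event to turn a successful test into a statement about true means, use the $\delta$-drift bound to carry optimality of the committed arm through the buffer, and then induct on rank via serial dictatorship --- is exactly the paper's. The induction step is also handled the same way: under the hypothesis that agents $1,\dots,r-1$ exploit their stable-match arms with zero regret, the dominated set coincides with $\{\ell_*^{(1,t)},\dots,\ell_*^{(r-1,t)}\}$, and the test restricted to the complement certifies the stable-match arm of agent $r$.

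There is, however, one step where your accounting would not go through as written. The test certifies a \emph{window-averaged} gap $\bar{\mu}_{a,t}-\bar{\mu}_{b,t}>2r(\tilde{w})-\delta$, and you propose to ``propagate this gap pointwise across the commit window using smoothness,'' arguing that the total drift \emph{over the exploit duration} is absorbed by the certified margin. But a positive window average says nothing directly about the pointwise gap at the end of the window (i.e., at the commit time): the gap could have been large early in the window and already negative when exploitation begins, and bounding the average-to-endpoint discrepancy by $w\delta$ does not work in general, since $w\delta$ need not be small compared to $r(\tilde{w})\sim\tilde{\lambda}$. The paper's fix is a pigeonhole step (its Claim 1): an average gap of at least $G$ over the window forces the existence of \emph{some} time $t'\in[s_{i_1},g_{i_1}]$ with pointwise gap at least $G$, and the drift budget is then run forward from $t'$ --- which in the worst case is the phase start $s_{i_1}$ --- through the end of exploitation. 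This is precisely why the buffer is anchored at $s_{i_1}$ rather than at the commit time: the quantity that must be dominated by $G/\delta$ is the whole phase length (exploration plus exploitation), not just the exploit duration. You correctly flag this conversion as ``the main obstacle,'' but the mechanism you sketch for it is the one that fails; the existence-of-$t'$ argument is the missing ingredient.
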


\begin{proof}
We first prove the result for agent ranked $1$. For any phase $i_1$ of Agent $1$, denote by time $t=g_{i_1}$ to be the time-instant at which an arm $a$ and $\lambda > 0$ is identified that satisfies $a >_{\lambda} b$ for all $b\in[k]\setminus \{a\}$. In words, time $g_{i_1}$ is the time when the statistical test by Agent $1$ succeeds. Recall from the notations in the algorithm that $\Lambda_{i_1} := g_{i_1}-s_{i_1}$. 

Suppose in a phase $i_1$, agent $1$ exploits an arm $a \in [k]$ one or more rounds. Notationally, this is from times $[g_{i_1}+1, t_{i_1+1}]$. We  will show that {\em (i)} there exists a minimum gap $\lambda > 0$, such that at time $g_{i_1+1}$, for all arms $a^{'} \in [k]\setminus\{a\}$, the mean of arm $a$ exceeds $a^{'}$ by a certain margin, and {\em (ii)} in the duration $[t_{i_1}+\tau_{i_1}, t_{i_1+1}]$ is set such that the chosen arm $a$ continues to be optimal in the entire EXPLOIT phase. The first claim is formalized below. 

\begin{claim}
Under the good event $\mathcal{E}$, there exists a time $t^{'} \in [s_{i_1}, g_{i_1}]$, such that for all arms $b \in [k]\setminus\{a\}$, $\mu_{a,t^{'}} - \mu_{b,t^{'}} \geq  4\sqrt{\frac{\log(T)}{\Lambda_{i_1}}}-k\delta$. 
\label{claim_1}
\end{claim}
\begin{proof}
The statistical test succeeded at time $g_{i_1}$, i.e., there exists a $\lambda > 0$ such that $a >_{\lambda} b$, for all $b \in [k]\setminus \{a\}$. By Definition \ref{defn:lambda_better}, the window size $w := k\lceil \frac{c_1 \log(T)}{\lambda^2} \rceil$. Since the test succeeds at time $t = g_{i_1}$, clearly $w \leq \Lambda_{i_1}$. 

In order to describe the proof, we set some notations. For every arm $a \in [k]$, denote by the set of times $\mathcal{L}_{i_1}^{(a)}:= \{l_1^{(a)}, \cdots, l_{w/k}^{(a)}\}$ to be the $w/k$ times arm $a$ was played in the time-interval $[g_{i_1}-w, g_{i_1}]$. These times are random variables \textemdash however conditioned on $g_{i_1}$, these are deterministic since in the Explore phase of Algorithms \ref{algo:two} and \ref{algo:gen}, agents explore the arms in a round-robin fashion from arms indexed the smallest to the largest. Denote by $\mu_{a,t}(w) := \frac{k}{w}\sum_{s \in \mathcal{L}_{i_1}^{(a)}}\mu_{a,s}$. For every arm $a \in [k]$, denote by the random index $l_a$ to be the number of times arm $a$ has been played in the past, before time $g_{i_1} - w$. 

Since the statistical test succeeds at time $t = g_{i_1}$, we have from Definition \ref{defn:lambda_better}

\begin{align*}
    \mu_{a,t}(w) + \frac{k}{w}\sum_{s=l_a}^{l_a+w/k}U_a[s] > \mu_{b,t}(w) + \frac{k}{w}\sum_{s=l_b}^{l_b+w/k}U_b[s] + 4 r(w/k) - \delta.
\end{align*}
Re-arranging and using the definition of the Good event, we have 
\begin{align*}
    \mu_{a,t}(w) - \mu_{b,t}(w) &> 4r(w/k) - \delta +  \frac{1}{w}\sum_{s=l_b}^{l_b+w/2}U_b[l] -  \frac{1}{w}\sum_{s=l_a}^{l_a+w/2}U_a[l], \\
    &\geq 2r(w/k) - \delta.
\end{align*}
where the second inequality stems from the definition of the good event. Now, since the drift is bounded by $\delta$, we have that 
\begin{align*}
     \mu_{a,t}(w) - \mu_{b,t}(w) &\leq \frac{1}{w}\sum_{s=0}^{w-1}(\mu_{a,t-s} - \mu_{b,t-s}) + (k-1)\delta.
\end{align*}
Combining the preceeding two displays, we get that 
\begin{align*}
    \frac{1}{w}\sum_{s=0}^{w-1}(\mu_{a,t-s} - \mu_{b,t-s}) &> 2r(w/k) - k\delta, \\
    &\geq 2 \sqrt{\frac{4 \log(T)}{\Lambda_{i_1}}} - k\delta = 4\sqrt{\frac{\log(T)}{\Lambda_{i_1}}} - k\delta. 
\end{align*}
The second inequality follows from the fact that the window size $w \leq \Lambda_{i_1}$ is smaller than the explore duration of phase $i_1$. Now, since the average gap exceeds a bound, it implies that there exists at-least one $t^{'} \in [s_{i_1},g_{i_1}]$ such that $\mu_{a,t^{'}} - \mu_{b,t^{'}} > 4\sqrt{\frac{\log(T)}{\Lambda_{i_1}}} - k\delta$.

\end{proof}
Now, since the drift at each time-step in each arm is at-most $\delta$, arm $a$ will remain optimal compared to arm $b$ at-least in the time-interval $[t^{'}, t^{'}+ \frac{2}{\delta}(4\sqrt{\frac{\log(T)}{\Lambda_{i_1}}} - k\delta)]$, i.e., arm $a$ is optimal compared to arm $b$ in the duration $[t^{'}, t^{'} + \frac{4}{\delta}\sqrt{\frac{4\log(T)}{\Lambda_{i_1}}} - k]$. Since $t^{'} \geq s_{i_1}$, and  from Algorithms \ref{algo:two} and \ref{algo:gen} the definition of Buffer is $buf := \frac{4}{\delta}\sqrt{\frac{4\log(T)}{\Lambda_{i_1}}} - k$, arm $a$ is superior to arm $b$ in the exploit duration of phase $i_1$. Now, since arm $b$ was arbitrary, this implies that Agent $1$ will incur no regret during the exploit phase of $i_1$.

For the general case, we will prove by induction. Suppose the induction hypothesis that all agent ranked $1$ through to $r-1$ are incurring $0$ regret in an exploit phase. Notice from the description of Algorithm \ref{algo:gen} that agent ranked $r$ can potentially go into an exploit phase if and only if all agents ranked $1$ through $r-1$ are in an exploit phase. Additionally, the base case of the induction hypothesis is what we established in the preceding paragraph where agent ranked $1$ incurs $0$ regret in the exploit phase. Under this induction hypothesis, we will now argue that agent ranked $r$ will also incur $0$ regret in the corresponding exploit phase. 

We make one observation based on the serial-dictatorship structure. If all agents ranked $1$ through $r-1$ are in {\em (i)} Exploit phase and {\em (ii)} are incurring $0$ regret, then the stable match optimal arm for agent ranked $r$ is to play the arm with the highest mean among those arms not being exploited by agents ranked $1$ through $r-1$. This is a simple consequence of the definition of stable match (c.f. Section \ref{sec:setup}). Thus, it suffices to argue that when agent ranked $r$ commits, it commits to the optimal arm. We use identical arguments as for agent ranked $1$ to show that.
\begin{claim}
If at time $t$, for a given $\Omega \subset [k]$ with $|\Omega| = r-1$, the statistical test succeeds with arm $a \in [k]$, then  there exists a time $t^{'} \in [s_{i_r}, t]]$, such that for all arms $b \in \Omega \setminus \{a\}$, $\mu_{a,t^{'}} - \mu_{b,t^{'}} \geq  4\sqrt{\frac{\log(T)}{\Lambda_{i_1}}}-k\delta$. 
\end{claim}
The proof follows identical arguments as that of Claim \ref{claim_1} by using the observation that $r(w)$ is a decreasing function of $w$. 
\end{proof}
This result was shown for the special case of $N=1$ and $k=2$ in \cite{krishnamurthy2021slowly} Lemma $2$. Lemma \ref{lem:snoozed_arms_are_sub_optimal} generalizes that to the multi-agent and multi-arm setting. 

}
\subsection{Other Notations used in the proof}
\label{subsec:proof_notations}

In order to improve readability, we collect all the notations used in the course of the proof.

We now prove the regret of both Agents 1 and 2 for Algorithm~\ref{algo:two}. Note that, Agent 1 just plays the Snooze-IT algorithm of \cite{krishnamurthy2021slowly}, and hence we borrow the techniques developed there to obtain the regret of Agent 1.

More interestingly, in this section, we provide a full characterization of the regret of Agent 2. Note that since Agent 2 plays on a restrictive or dominated set of arms, dictated by Agent 1, it encounters additional regret. In the description of Algorithm~\ref{algo:two}, we pointed out the scenarios where Agent 2 is forced to (a) either explore or (b) to stop exploiting. Here, we obtain a regret upper-bound from these forced exploration-exploitation.

To better understand the algorithm, let use focus on a particular phase of Agent 1, say the $i_1$-th epoch. We use the same notation defined in Algorithm~\ref{algo:two}. So, $s_{i_1}$ denotes that start-time of epoch $i_1$ ans $s_{i_1 +1}$ denotes the end of epoch $i_1$. The exploration duration before committing to an arm is $\Lambda_{i_1}$, and so the exploitation phase starts at $s_{i_1} + \Lambda_{i_1}$. Similarly, the length of exploitation is $s_{i_1 +1} - \Lambda_{i_1}$. Let us also assume that the committed arm of Agent 1 in this phase is $i^*$.

Since Agent 1 plays Snooze-IT, during the exploitation phase, it incurs no regret during the exploitation phase from Lemma \ref{lem:snoozed_arms_are_sub_optimal}, and from \cite{krishnamurthy2021slowly}, the (expected) regret of Agent 1 in $i_1$-th phase is
\vspace{2mm}
\begin{align*}
    R_1(i_1) \leq \mathcal{O}(\sqrt{\Lambda_{i_1} \, \log T})
\end{align*}
\vspace{2mm}
Technically, the lemmas of \cite{krishnamurthy2021slowly} are under a good event, which is identical to the good event definition in Definition $\mathcal{E}^{(1)}$ \ref{defn:good_event}. We now look at the behavior of Agent 2, while Agent 1 is in phase $i_1$. As shown in Figure~\ref{fig:algo}, there can be multiple phases of Agent 2 inside one phase of Agent 1, and hence let us assume that at the beginning of epoch $i_1$, the phase number of Agent 2, given by $i_2 = n_{i_1}$, and by the end of phase $i_1$, we have $i_2 = n_{i_1} + N_{i_1}$.

\subsection{Regret of Agent 2 during the exploration period of the $i_1$th phase of Agent 1}
In this phase, which lasts for $\Lambda_{i_1}$ rounds, we characterize the regret of Agent 2. For this, let us define $\tau_{n_{i_1}}$ as the duration, starting from $s_{i_1}$ it takes for Agent 2 to commit to an arm unconditionally. This means that in the absence of competition, starting from $s_{i_1}$, Agent 2 would take $\tau_{n_{i_1}}$ to commit to an arm by exploring all the arms. We have 2 cases:

\textit{Case I ($\Lambda_{i_1} \leq \tau_{n_{i_1}}$):} In this case, since Agent 1 commits first, the regret of Agent 2,  is given by $\mathcal{O}(\sqrt{\Lambda_{i_1} \log T})$. In this case, Agent 2 is not forced to explore.

\textit{Case II ($\Lambda_{i_1} \geq \tau_{n_{i_1}}$):} In this case, Agent 2 incurs a regret of $\mathcal{O}(\sqrt{\tau_{n_{i_1} \log T})}$ plus some additional regret owing to force exploration. The forced exploration comes from the fact that in this case, although Agent 2 has enough information to commit, it still explores because Agent 1 has not committed yet, and the commitment of Agent 2 will cause periodic collisions for Agent 2.

\subsubsection{Forced Exploration} We now characterize the regret of Agent 2 form forced exploration. Note that Agent 2 is forced to explore at time $t$ if:
\begin{enumerate}
    \item Agent 1 is exploring, and
    \item At time $t$, $S_2^{(j_t)}$ is non-empty, where $j_t \in [k]$ is the  arm played by Agent 1.
\end{enumerate}
Let us understand this in a bit more detail. If $S^{(j_t)}_2$ is non-empty, it implies that without the presence of competition, Agent 2 would have played arm $j_t$. This comes from the definition of $S_2^{(.)}$. Now, when Agent 1 is playing that arm, it implies a forced exploration on Agent 2. We can write down the above forced exploration term as the following
\vspace{2mm}
\begin{align*}
    \textsf{Forced Exploration} = \sum_{t=s_{i_1}}^{s_{i_1}+ \Lambda_{i_1}} \sum_{j=1}^k \mathbf{1}(j_t = j) \, \mathbf{1}\left( \tau_{n_{i_1}}^{(j)} < t \right),
\end{align*}
\vspace{2mm}
where $\tau_{n_{i_1}}^{(j)}$ is defined as the duration of the exploration period before the $(\Tilde{\lambda},\mathcal{A})$ test succeeds with $\mathcal{A}=[k]\setminus \{j\}$ in epoch $i_2$, when Agent 2 is in state \texttt{Explore ALL}.

Combining this two, the regret of Agent 2 during the exploration phase of Agent 1 is given by
\vspace{2mm}
\begin{align*}
   \mathcal{O}\left[ \mathbf{1} (\text{Case-I}) \sqrt{\Lambda_{i_1} \, \log T} + \mathbf{1} (\text{Case -II}) \left(  \sqrt{\tau_{n_{i_1}} \, \log T} + \sum_{t=s_{i_1}}^{s_{i_1}+ \Lambda_{i_1}} \sum_{j=1}^k \mathbf{1}(j_t = j) \, \mathbf{1}\left( \tau_{n_{i_1}}^{(j)} < t \right) \right) \right]
\end{align*}
\vspace{2mm}
\subsection{Regret of Agent 2 during exploitation phase of Agent 1}
Suppose Agent 1 commits to arm $i^*$. In this phase, Agent 2 is forced to play in a restrictive set $[k] \setminus \{i^*\}$. Note that in this phase, several cases may happen:

\textit{Agent 2 is exploiting:} Note that Agent 2 keeps the set $S_2^{(j)}$ for all $j \in [k]$, and if $j\neq i^*$, Agent 2 immediately commits to $j$. Keeping track of such $S_2^{(j)}$ thus ensures that agent 2's exploration after not wasted.

Furthermore, if $S_2^{(j)}$ is empty, for all $j \neq i^*$, Agent 2 will keep accumulating samples, now from a restrictive set $[k] \setminus \{i^*\}$, and may commit to an arm within the set. In both the cases, Lemma \ref{lem:snoozed_arms_are_sub_optimal} gives that the regret is zero.

\textit{Agent 2 is exploring:} Note that inside the exploit phase of Agent 1, Agent 2 basically plays the Snooze-IT algorithm over the arm-set $[k] \setminus \{i^*\}$. Hence, the regret owing to exploitation is given by
\vspace{2mm}
\begin{align*}
    \mathcal{O}\left( \sum_{p = n_{i_1} +1}^{n_{i_1} + N_{i_1}} \sqrt{(k-1)\Tilde{\tau}^{(i^*)}_p \, \log T}  \right),
\end{align*}
\vspace{2mm}
where $N_{i_1}$ is the number of phases of Agent 2 in the current exploitation phase of Agent 1, and $\Tilde{\tau}_{p}^{(i^*)}$ is defined as the duration of the exploration period before the $(\Tilde{\lambda},\mathcal{A})$ test succeeds with $\mathcal{A}=[k]\setminus \{i^*\}$, when Agent 2 is in state \texttt{Explore}-$i^*$.

\subsection{Total Regret of both agents in one phase} 
Putting everything together, the regret of Agent 1 and 2, denoted by $R_1(i_1)$ and $R_2(i_1)$ respectively, during the $i_1$-th phase of Agent 1 is given by
\begin{align*}
    R_1(i_1) \leq \mathcal{O}(k \sqrt{\Lambda_{i_1} \, \log T}),  \,\, \text{and}
\end{align*}
\begin{align*}
    R_2(i_1) &\leq  \mathcal{O} \Bigg[ \underbrace{\mathbf{1} (\text{Case-I}) \sqrt{\Lambda_{i_1} \, \log T}}_{T_1} + \underbrace{\mathbf{1} (\text{Case -II}) \left(  \sqrt{\tau_{n_{i_1}} \, \log T} + \sum_{t=s_{i_1}}^{s_{i_1}+ \Lambda_{i_1}} \sum_{j=1}^k \mathbf{1}(j_t = j) \, \mathbf{1}\left( \tau_{n_{i_1}}^{(j)} < t \right) \right)}_{T_2} \\
    & \qquad \qquad + \underbrace{\left( \sum_{\ell = n_{i_1} +1}^{n_{i_1} + N_{i_1}} \sqrt{(k-1)\Tilde{\tau}^{(i^*)}_\ell \, \log T}  \right)}_{T_3} \Bigg ]
\end{align*}

\subsubsection{Regret for Agent 1 in phase $i_1$:} 
We now bound $\sqrt{\Lambda_{i_1}}$ using Lemma 4 and Lemma 5 ( of \cite{krishnamurthy2021slowly}). In particular, we extend these lemmas to the $k$ arm case, and obtain
\begin{align*}
    \sqrt{\Lambda_{i_1}} \leq \mathcal{O}\left(\frac{1}{\lambda_{g_{i_1}-1}}\right) \sqrt{k \log T},
\end{align*}
where $g_{i_1} = s_{i_1} + \Lambda_{i_1}$ is the time instant where the test succeeds for Agent 1, and $\lambda_t$ denotes the dynamic gap. Hence, we have
\begin{align*}
    R_1(i_1) \leq \mathcal{O}(\sqrt{k \, \Lambda_{i_1} \, \log T}) \leq  \mathcal{O}\left( \frac{k \log T}{\lambda_{g_{i_1}-1}[1]} \right),
\end{align*}

\subsubsection{Regret for Agent 2 in phase $i_1$} 
We now upper bound $T_1,T_2$ and $T_3$ separately. We first consider $T_1$.

We have
\begin{align*}
    T_1 = \mathbf{1} (\text{Case-I}) \sqrt{\Lambda_{i_1} \, \log T} \leq \sqrt{\Lambda_{i_1} \, \log T}, 
\end{align*}
and using the same modified lemma as before, we obtain
\begin{align*}
    T_1 \leq \mathcal{O}\left( \frac{k \log T}{\lambda_{g_{i_1}-1}[1]} \right),
\end{align*}
where $\lambda_{g_{i_1}-1}[1]$ denotes the dynamic gap for player $1$ at time instant $g_{i_1}-1$.

For $T_2$, we have
\begin{align*}
    T_2 & = \mathbf{1} (\text{Case -II}) \left(  \sqrt{\tau_{n_{i_1}} \, \log T} + \sum_{t=s_{i_1}}^{s_{i_1}+ \Lambda_{i_1}} \sum_{j=1}^k \mathbf{1}(j_t = j) \, \mathbf{1}\left( \tau_{n_{i_1}}^{(j)} < t \right) \right) \\
    & \leq \underbrace{\Bigg (  \sqrt{\tau_{n_{i_1}} \, \log T}}_{T_{2,1}} + \underbrace{ \sum_{t=s_{i_1}}^{s_{i_1}+ \Lambda_{i_1}} \sum_{j=1}^k \mathbf{1}(j_t = j) \, \mathbf{1}\left( \tau_{n_{i_1}}^{(j)} < t \right) \Bigg )}_{T_{2,2}}
\end{align*}
The term $T_{2,1}$ can be bounded similar to $\Lambda_{i_1}$. This is the exploitation time of Agent 2 in the Explore all phase. Hence, it can be upper bounded as
\begin{align*}
    T_{2,1} \leq \mathcal{O}\left( \frac{k \log T}{\lambda_{\Tilde{g}_{i_1}-1}[2]} \right),
\end{align*}
where $\Tilde{g}_{i_1} = s_{i_1} + \tau_{n_{i_1}}$ is the time instant where the test succeeds for Agent 2, and$\lambda_{\Tilde{g}_{i_1}-1}[2]$ denotes the dynamic gap for player $2$ at time instant $\Tilde{g}_{i_1}-1$.

Note that during the exploration phase of Agent 1, the arms are being played in a round robbin fashion, and hence
\begin{align*}
    T_{2,2} &\leq \sum_{t=s_{i_1}}^{s_{i_1}+ \Lambda_{i_1}} \sum_{j=1}^k \mathbf{1}(j_t = j) \, \mathbf{1}\left( \tau_{n_{i_1}}^{(j)} < t \right) \leq \sum_{t=s_{i_1}}^{s_{i_1}+ \Lambda_{i_1}} \sum_{j=1}^k \mathbf{1}(j_t = j) \\
    & \leq \sum_{j=1}^k \sum_{t=s_{i_1}}^{s_{i_1}+ \Lambda_{i_1}}  \mathbf{1}(j_t = j) \, \mathbf{1}\left( \tau_{n_{i_1}}^{(j)} < t \right) \\
    & \leq \sum_{j=1}^k \frac{\Lambda_{i_1}}{k} = \Lambda_{i_1}.
\end{align*}
Hence, we have
\begin{align*}
    T_{2,2} \leq \mathcal{O}\left[\left( \frac{1}{\lambda_{g_{i_1}-1}[1]} \right)^2 k \log T \right].
\end{align*}
Combining $T_{2,1}$ and $T_{2,2}$, we have
\begin{align*}
    T_2 \leq \mathcal{O} \left[\left( \frac{k \log T}{\lambda_{\Tilde{g}_{i_1}-1}[2]}  \right) + \left( \frac{1}{\lambda_{g_{i_1}-1}[1]} \right)^2 k \log T  \right]
\end{align*}
\vspace{2mm}

Let us now control $T_3$. Note that during the exploitation phase of Agent 1, Agent 2 only incurs regret while exploring within the set of $[k]\setminus \{i^*\}$, and the regret incurred from that is equivalent to playing a Snooze-IT algorithm on arm-set $[k]\setminus \{i^*\}$. So, using the modified lemma now using on arm set $[k]\setminus \{i^*\}$ with cardinality $k-1$ is given by
\vspace{2mm}
\begin{align*}
    \sqrt{\Tilde{\tau}_p^{(i^*)}} \leq \mathcal{O} \left( \frac{\sqrt{(k-1) \log T}}{\lambda^{(i^*)}_{(g_{n_{i_1},p})-1}[2]}\right),
\end{align*}
\vspace{2mm}
where $g_{n_{i_1},p}$ is the time instant where the test succeeds when Agent 2 is in $p$-th phase. Furthermore, since Agent 2 is not playing arm $i^*$, this regret depends on the dynamic gap excluding arm $i^*$, denoted by $\lambda^{(i^*)}_{(.)}$. Using this, we have
\vspace{2mm}
\begin{align*}
    T_3 = \sum_{p = n_{i_1} +1}^{n_{i_1} + N_{i_1}} \sqrt{(k-1)\Tilde{\tau}^{(i^*)}_p \, \log T}  \leq  \sum_{p = n_{i_1} +1}^{n_{i_1} + N_{i_1}} \left(\frac{(k-1)\log T}{\lambda^{(i^*)}_{(g_{n_{i_1},p})-1}[2]}\right).
\end{align*}
\vspace{2mm}
Combining $T_1,T_2$ and $T_3$, we obtain
\vspace{2mm}
\begin{align*}
    R_2(i_1) &\leq \mathcal{O} \left[ \left( \frac{k \log T}{\lambda_{g_{i_1}-1}[1]} \right) + \left( \frac{k \log T}{\lambda_{\Tilde{g}_{i_1}-1}[2]}  \right) + \left( \frac{1}{\lambda_{g_{i_1}-1}[1]} \right)^2 k \log T +  \sum_{p = n_{i_1} +1}^{n_{i_1} + N_{i_1}} \left(\frac{(k-1)\log T}{\lambda^{(i^*)}_{(g_{n_{i_1},p})-1}[2]}\right) \right] \\
   & \leq  \mathcal{O} \left[ \left( \frac{k \log T}{\lambda_{\Tilde{g}_{i_1}-1}[2]}  \right) + \left( \frac{1}{\lambda_{g_{i_1}-1}[1]} \right)^2 k \log T + \sum_{p = n_{i_1} +1}^{n_{i_1} + N_{i_1}} \left(\frac{(k-1)\log T}{\lambda^{(i^*)}_{(g_{n_{i_1},p})-1}[2]}\right) \right],
\end{align*}
\vspace{2mm}
since $\lambda_t \in [0,1]$. What remains is a bound on $N_{i_1}$.

\subsection{Total Regret upto time $T$}
In the above calculations, we have the regret for the $i_1$-th phase of Agent 1 only. Note that the starting instances of epochs for Agent 1, denoted by $\{s_{i_1}\}_{i_1=1,2,..}$ is random. To handle this issue, the learning epoch is split into several (deterministic) blocks and the total regret guarantee is given over these deterministic splits.

\subsection{Total Regret for Agent 1}
We derive the Lemma 7 of \cite{krishnamurthy2021slowly}, for the case of $k$ arms and obtain that the minimum length of an epoch of Agent 1 is given by $\Omega(\delta^{-2/3} k^{1/3} \log^{1/3} T)$. Motivated by this, we fix the deterministic blocks of length $\delta^{-2/3} k^{1/3} \log^{1/3} T$ so that each block can accommodate at most $2$ phases. Using this, we write the regret of Agent 1 as
\vspace{2mm}
\begin{align*}
    R_1 \leq C \sum_{\ell=1}^m \frac{1}{\lambda_{\min,\ell}[1]} \ k \log T,
\end{align*}
\vspace{2mm}
where $m$ denotes the number of blocks, each having length at most $\min\{ c \, \delta^{-2/3} k^{1/3} \log^{1/3} T, T \}$, and $\lambda_{\min,\ell}[1] = \min_{t \in \ell\text{-th block}} \lambda_t $.

\subsubsection{Total Regret of Agent 2}
Now let us look at Agent 2. Note that in the exploitation phase of Agent 1, Agent 2 either plays Snooze-IT with $k-1$ arms, or uses the optimistic estimates to exploit. In any case, from the point of view of incurring regret, the performance of Agent 2 in the exploitation time of Agent 1 is that of Snooze-IT with $k-1$ arms without competition.

So, using Lemma 7 of \cite{krishnamurthy2021slowly}, the minimum length between 2 epochs of Agent 2 is given by
$$
\Omega \left (\delta^{-2/3} (k-1)^{1/3} \log^{1/3} T \right).$$ 

Note that the since an entire phase of Agent 1, which includes exploration as well as exploitation is lower bounded by $\Omega(\delta^{-2/3} k^{1/3} \log^{1/3} T)$, trivially the exploitation phase is at least, $\Omega(\delta^{-2/3} k^{1/3} \log^{1/3} T)$. Hence the number of epochs played by Agent 2 for during the $i_1$-th phase of Agent 1 is given by
\vspace{2mm}
\begin{align*}
    N_{i_1} \leq 2 \times 2 \times \left \lceil \left(\frac{k}{k-1}\right)^{1/3} \right \rceil.
\end{align*}
\vspace{2mm}
We are now ready to write the total regret of Agent 2 upto time $T$. It is given by
\vspace{2mm}
\begin{align*}
    R_2 &\leq C_1 \sum_{\ell=1}^m \bigg \lbrace \left( \frac{1}{\lambda_{\min,\ell}[2]} \right) k \log T + \left( \frac{1}{\lambda_{\min,\ell}[1]} \right)^2 k \log T \\
    & \qquad + \left \lceil \left(\frac{k}{k-1}\right)^{1/3} \right \rceil \left(\frac{1}{\min_{a \in [k]} \lambda^{(a)}_{\min,\ell}[2]} \right) (k-1) \log T \bigg \rbrace,
\end{align*}
\vspace{2mm}
where the number of blocks is denoted by $m$, each having length at most $\min\{ c \, \delta^{-2/3} k^{1/3} \log^{1/3} T, T \}$, and
$$
\lambda_{\min}[\ell] = \min_{t \in \ell\text{-th block}} \lambda_t. $$

Furthermore, $\lambda^{(a)}_{(.)}$ denotes the dynamic gap in the problem without arm $a$. This concludes the theorem.

\vspace{4mm}
\section{Proof of Theorem~\ref{thm:gen}}
\label{sec:gen_proof}
In this theorem, we consider the generic case of $N$ agents, and we characterize the regret of agent ranked $r$. We consider the learning of Agent $r-1$ as the action of Agent $r$ will be dominated by that. The proof here follows in the same lines as of Theorem~\ref{thm:two}. The problem has an inductive structure, and this proof exploits that. It turns out that without loss of generality, we may only focus on the behavior of $r-1$-th agent; very similar to focusing on the first agent in the previous theorem.

\subsection{Behavior of $r-1$-th ranked Agent}

We consider $1$ epoch of agent $r-1$. From the notation of Algorithm~\ref{algo:gen}, it starts at $t_{i_{r-1}}$, and let the exploration period is $\Lambda_{i_{r-1}}$. Similarly, the exploitation period duration is $t_{i_{r-1}+1} - \Lambda_{i_{r-1}}$.

Note that if $r\geq 3$, the exploration of Agent $r-1$ will be restricted. Let $\mathcal{C}_t(r-1)$ be the set of arms dominated by agents ranked $1$ to $r-2$, i.e., $|\mathcal{C}_t(r-1)| \leq r-2$. With this, the dynamic gap parameter for Agent ranked $r-1$ is given by $\lambda_t^{\mathcal{C}_t(r-1)}[r-1]$. Note that when $\mathcal{C}_t(r-1) = \phi$, Agent $r-1$ will Explore all arms.

\subsubsection{Regret of Agent $r$ in explore phase of Agent $r-1$}
As presented in the previous theorem, we break he regret of Agent $r$, during the exploration and the exploitation phase of agent $r-1$. 

During the exploration phase, Agent $r$ can either Explore all arms, or explore within a restricted set. Recall that $\mathcal{C}_t(r)$ denotes the set of arms dominated by agents ranked higher than Agent $r$. If $\mathcal{C}_t(r) = \phi$, Agent $r$ explores all the arms. Otherwise it will explore the set of arms given by $[k]\setminus \mathcal{C}_t(r)$.

Similar to the 2 agent case, here also, Agent 2 will face \textit{forced exploration}, and the definition is identical to the one in 2 agent case---instead of conditioning on the behavior of Agent 1, here, we condition on the behavior of Agent $r-1$.

Following the same lines, we obtain the regret of Agent $r$ in the exploration phase of Agent $r-1$ is given by
\vspace{2mm}
\begin{align*}
     \mathcal{O} \left[ \left( \frac{(k - |\mathcal{C}_t(r-1)|) \log T}{\lambda_{g_{i_{r-1}}-1}^{\mathcal{C}_t(r-1)}[r-1]} \right) + \left( \frac{(k- |\mathcal{C}_t(r)|) \log T}{\lambda^{\mathcal{C}_t(r)}_{g_{i_r}-1}[r]}  \right) + \left( \frac{1}{\lambda_{g_{i_{r-1}}-1}^{\mathcal{C}_t(r-1)}[r-1]} \right)^2 (k- |\mathcal{C}_t(r-1)|) \log T  \right]
\end{align*}
\vspace{2mm}
where the time instances, $g_{i_{r-1}}$ denote the time the $(\Tilde{\lambda},\mathcal{A})$ test succeeds for Agent $r-1$ with $\mathcal{A}=[k]\setminus \mathcal{C}_t(r-1)$. Similarly, $g_{i_r}$ denote the time $(\Tilde{\lambda},\mathcal{A})$ test succeeds for Agent $r$ with $\mathcal{A}=[k]\setminus \mathcal{C}_t(r)$. Note that $|\mathcal{C}_t(r-1)| \leq r-2$ and $|\mathcal{C}_t(r)| \leq r-2$, since Agent $r-1$ has not committed yet. We upper bound the following as
\vspace{2mm}
\begin{align*}
     \mathcal{O} \left[  \left( \frac{(k- |\mathcal{C}_t(r)|) \log T}{\lambda^{\mathcal{C}_t(r)}_{g_{i_r}-1}[r]}  \right) + \left( \frac{1}{\lambda_{g_{i_{r-1}}-1}^{\mathcal{C}_t(r-1)}[r-1]} \right)^2 (k- |\mathcal{C}_t(r-1)|) \log T  \right]
\end{align*}
\vspace{2mm}
\subsubsection{Regret of Agent $r$ in exploit phase of Agent $r-1$}
Similar to the behavior of Agent $2$, in this case Agent $r$ may be multiple epochs inside an exploration period of Agent $r-1$. 

Note that inside the exploit phase of Agent 1, Agent 2 basically plays the Snooze-IT algorithm over the arm-set $[k] \setminus \mathcal{C}_t(r)$. Hence, the regret owing to exploitation is given by
\vspace{2mm}
\begin{align*}
    \mathcal{O}\left( \sum_{p = i_{r} +1}^{i_r + N_{i_r}} \sqrt{(k-|\mathcal{C}_t(r)|)\Tilde{\tau}^{(\mathcal{C}_t(r))}_p \, \log T}  \right),
\end{align*}
\vspace{2mm}
where $N_{i_r}$ is the number of phases of Agent 2 in the current exploitation phase of Agent 1, and $\Tilde{\tau}_{j}^{(\mathcal{C}_t(r))}$ is defined as the duration of the exploration period before the $(\Tilde{\lambda},\mathcal{A})$ test succeeds with $\mathcal{A}=[k]\setminus \mathcal{C}_t(r)$.

We bound the above as
\vspace{2mm}
\begin{align*}
    \mathcal{O} \left[ \sum_{p = i_{r} +1}^{i_r + N_{i_r}} \left(\frac{(k-|\mathcal{C}_t(r)|)\log T}{\lambda^{\mathcal{C}_t(r)}_{(g_{i_r,p})-1}[2]}\right) \right].
\end{align*}
\vspace{2mm}

We now need to bound $N_{i_r}$. Note that, when Agent $1$ commits, $|\mathcal{C}_t(r-1)| = r-2$. As a consequence, using \citep[Lemma 7]{krishnamurthy2021slowly}, the minimum length of an episode for Agent $r-1$ is $\Omega(\delta^{-2/3} (k-r+2)^{1/3} \log^{1/3} T$. Hence, we have
\begin{align*}
    N_{i_r} \leq 2 \times 2 \times \left\lceil \left ( \frac{k-r+2}{k-r+1} \right)^{1/3} \right \rceil.
\end{align*}
\vspace{2mm}

We now break the learning horizon into deterministic epochs. We use deterministic blocks of fixed length given by $\mathcal{O}(\delta^{-2/3} k^{1/3} \log^{1/3} T)$. Now, within one such block, the number of epochs of Agent $r-1$ is upper bounded by 
\vspace{2mm}
\begin{align*}
    \left \lceil \left( \frac{k}{k-r+2} \right)^{1/3} \right \rceil.
\end{align*}
\vspace{2mm}
Hence, in one such deterministic block the regret of Agent $r$ will be multiplied by the regret in one phase of Agent $r-1$ times the number of phases of Agent $r-1$.

\subsubsection{Regret Expression}
We are now ready to write the expression of regret for Agent $r$. We have
\vspace{2mm}
\begin{align*}
    R_r &\leq C \sum_{\ell=1}^m \Bigg \lbrace \left( \frac{k}{k-r+2} \right)^{1/3} \left[\left( \frac{1}{\displaystyle \min_{\substack{\mathcal{C}\in [k] \\ |\mathcal{C}| \leq r-2}} \lambda^\mathcal{C}_{\min,\ell}[r]} \right) + \left( \frac{1}{\displaystyle \min_{\substack{\mathcal{C}\in [k] \\ |\mathcal{C}| \leq r-2}} \lambda^\mathcal{C}_{\min,\ell}[r-1]} \right)^2 \right] k \,\log T \\
    & \qquad + \left\lceil \left ( \frac{k-r+2}{k-r+1} \right)^{1/3} \right \rceil \left( \frac{1}{\displaystyle \min_{\substack{\mathcal{C}\in [k] \\ |\mathcal{C}| \leq r-1}} \lambda^\mathcal{C}_{\min,\ell}[r]} \right) (k-r+1) \log T \Bigg \rbrace,
\end{align*}
where we now discuss several terms. 
\vspace{2mm}
The term $ \min_{\substack{\mathcal{C}\in [k] \\ |\mathcal{C}| \leq r-2}} \lambda^\mathcal{C}_{\min,\ell}[r]$ denotes the (worst-case) gap, of Agent $r$ on a subset $\mathcal{C}$ of cardinality at most $r-2$. Note that this is an lower bound on the term $\lambda^{\mathcal{C}_t(r)}_{g_{i_r}-1}[r]$. Furthermore, since we do not have a lower bound on $|\mathcal{C}_t(r)|$, we upper bound $k-|\mathcal{C}_t(r)|$ as $k$.

Similarly, the second term comes from forced exploration. The final term also follows from the exploitation of Agent $r$. Here, $ \min_{\substack{\mathcal{C}\in [k] \\ |\mathcal{C}| \leq r-1}} \lambda^\mathcal{C}_{\min,\ell}[r]$ denotes the (worst-case) gap, of Agent $r$ on a subset $\mathcal{C}$ of cardinality at most $r-1$. Note that this is an lower bound on the term $\lambda^{\mathcal{C}_t(r-1)}_{g_{i_r}-1}[r-1]$. This proves the theorem.

\subsection{Proof of Lemma~\ref{rem:slow}}

The proof comes from a reduction argument from the setup without blackboard to the setup with blackboard. Here, we obtain a sufficient condition on $\delta$, such that the dynamics ``without blackboard'' setup can be reduced to the problem setting of ``with blackboard''. In the case of two agents, the proof for this reduction uses the following fact established in Section~\ref{sec:board}: in the absence of the black-board, Agent 2 requires at-most $k$ time-steps to infer the state of agent $1$. Thus, if $\delta^{'} = \delta/k$, then the deviation in arm-means in the time before communication can occur is at-most $\delta$. This coincides with the deviation of the setting ``with blackboard'' where in one time-step Agent 2 learns of the state of Agent 1.

Thus, the regret proofs for the case ``without blackboard'' are just corollaries of the regret proof ``with blackboard'' with $\delta'$. 

\subsection{Proof of Lemma~\ref{remark:remove_black_board}}
The proof of Lemma~\ref{remark:remove_black_board} follows identical argument. With the modified reward model, we argue in Section~\ref{sec:board} that it takes at most $k$ time steps for Agent $r$ to learn the arms that are being dominated by Agents ranked $1$ to $r-1$. Hence, essentially, the framework is equivalent to the proof of Lemma~\ref{rem:slow}, and hence the lemma follows. 
\section{Experimental Setup}
\label{sec:additional_experiments}

In Figure \ref{fig:simualtions}, we show through simulations  that - {\em (i)} Snooze-IT of \cite{krishnamurthy2021slowly} outperforms vanila UCB of \cite{auer2002finite} in the case of single agent, {\em (ii)} DNCB multi-agent setting is effective to simulate and matches the theoretical insights, and {\em (iii)} in the multi-agent case, DNCB outperforms UCB-D3, especially for higher ranked agents. 
In all settings, we consider the arms to have gaussian distribution with variance $0.4$ and means varying with time as given below. All plots are plotted after averaging over 10 runs, with the median being highlighted in bold, and the inter-quartile range between the $25^{th}$ and $75^{th}$ quantiles in the shaded region. In the single agent setting of Figure \ref{fig:single_agent}, we considered three arms, with the third arm having a fixed mean of $0.5$ throughout. In the multi-agent setting in Figures \ref{fig:multi_agent}, \ref{fig:multi_agent_2}, \ref{fig:dncb_d3}, \ref{fig:dncb_d3_2}, we initialized the arm means randomly from the uniform distribution on $[0,1]$. In each of the $10$ runs, the arm means for every agent-arm pair evolved independently according to a  symmetric random walk by either adding or subtracting a value of $\delta$ as specified in the plot title. We simulated the DNCB algorithm by assuming access to a black-board, the performance on which can be translated to the setting without access to the black-board as seen in Remark \ref{remark:remove_black_board}. For UCB-D3, we use the standard hyper-parameters recommended in \cite{ucbd3}. The plots in Figure \ref{fig:multi_agent} is averaged over the randomness in the arm-mean variation across time as well.


\end{document}